\documentclass{article}

\PassOptionsToPackage{numbers, compress}{natbib}

\usepackage{amsmath}
\usepackage{amsthm}
\usepackage{enumitem}
\usepackage{graphicx}
\usepackage{wrapfig}
\usepackage{xcolor}
\usepackage{subcaption}
\theoremstyle{plain}
\newtheorem{proposition}{Proposition}
\newtheorem{remark}{Remark}

\newtheorem{theorem}{Theorem}

\usepackage[hidelinks]{hyperref}
\definecolor{citeblue}{HTML}{1565C0}
\hypersetup{colorlinks=true, linkcolor=citeblue, citecolor=citeblue, urlcolor=citeblue}

\definecolor{revblue}{RGB}{0, 90, 180}
\definecolor{revred}{RGB}{180, 30, 30}

\usepackage[preprint]{neurips_2026}

\usepackage[utf8]{inputenc} 
\usepackage[T1]{fontenc}    
\usepackage{hyperref}       
\usepackage{url}            
\usepackage{booktabs}       
\usepackage{amsfonts}       
\usepackage{nicefrac}       
\usepackage{microtype}      
\usepackage{xcolor} 
\usepackage[normalem]{ulem}

\title{Stochastic Adaptive Fourier Decomposition for Operator Learning}

\author{%
   Pengqing Shi \\
   University of Sydney \\
  \texttt{pengqing.shi@sydney.edu.au} \\
   \And
   Liming Zhang \\
   University of Macau \\
   \texttt{lmzhang@um.edu.mo} \\
   \And
   Tao Qian \\
   Macau University of Science and Technology \\
   \texttt{tqian@must.edu.mo} \\
   \And
   Stephen Tierney \\
   University of Sydney \\
   \texttt{stephen.tierney@sydney.edu.au} \\
   \And
   Jie Yin \\
   University of Sydney \\
   \texttt{jie.yin@sydney.edu.au} \\
   \And
   Junbin Gao \\
   University of Sydney \\
   \texttt{junbin.gao@sydney.edu.au} \\
}

\begin{document}

\maketitle

\begin{abstract}
Fourier Neural Operators (FNOs) offer an efficient paradigm for solving partial differential equations (PDEs). However, FNOs rely on a fixed Fourier basis and hard frequency truncation, which inherently limit their ability to model non-periodic, localized, and fine-scale solution structures. We propose the Stochastic Adaptive Fourier Decomposition Neural Operator (SAFDNO), a spectral neural operator that replaces predefined Fourier modes with an adaptive Takenaka-Malmquist (TM) orthonormal system derived from the theory of Stochastic Adaptive Fourier Decomposition (SAFD). Instead of performing an expensive greedy pole search in classical SAFD, SAFDNO amortizes stochastic pole selection through a neural pole predictor and constructs the adaptive TM system directly from latent features. The resulting operator performs learned filtering on coefficients of analytic branches in the Hardy space with respect to the input-adaptive TM system, preserving the global receptive field of spectral operators while providing a more flexible representation than using fixed spectral bases. Across nine PDE benchmark problems, SAFDNO achieves the best performance on all six regular-grid problems among strong neural operator baselines, with especially notable gains on Darcy, Burgers, and Navier-Stokes, where fixed Fourier modes are often less effective at modeling localized oscillations, sharp transitions, and multiscale structures. SAFDNO also exhibits stronger zero-shot super-resolution performance and shows less performance degradation when deployed on finer discretizations. These results suggest that the input-adaptive TM system provides a promising alternative to fixed spectral representations for neural operator learning.
\end{abstract}

\section{Introduction}\label{sec:intro}
Partial Differential Equations (PDEs) are fundamental mathematical tools for describing and modeling a wide range of physical and engineering systems, capturing how quantities evolve over time and space. Although traditional numerical methods are accurate,  they can be computationally expensive when solving PDEs on complex geometries. Operator learning, which aims to learn the mappings between infinite-dimensional function spaces, offers a more computationally efficient approach for solving various families of PDEs \citep{kovachki2023neuraloperator}.

The Fourier Neural Operator (FNO) \citep{fno} is an effective and efficient operator learning framework that represents a global convolution operator, equivalently a translation-invariant integral kernel, in the Fourier domain through learned Fourier multipliers. In practice, it applies the fast Fourier transform (FFT) with frequency truncation, performs a convolution operator on the retained low-frequency Fourier coefficients, and then uses the inverse FFT (iFFT) to transform the representation back to the physics domain. This architecture design equips FNO with efficient global receptive fields and discretization-invariant learning, allowing the learned operator to generalize across different discretizations. In particular, FNO can be trained on coarse grids and directly generalize to finer grids, thereby enabling zero-shot super-resolution.

\begin{figure}[htbp]
    \centering
    \includegraphics[width=\linewidth]{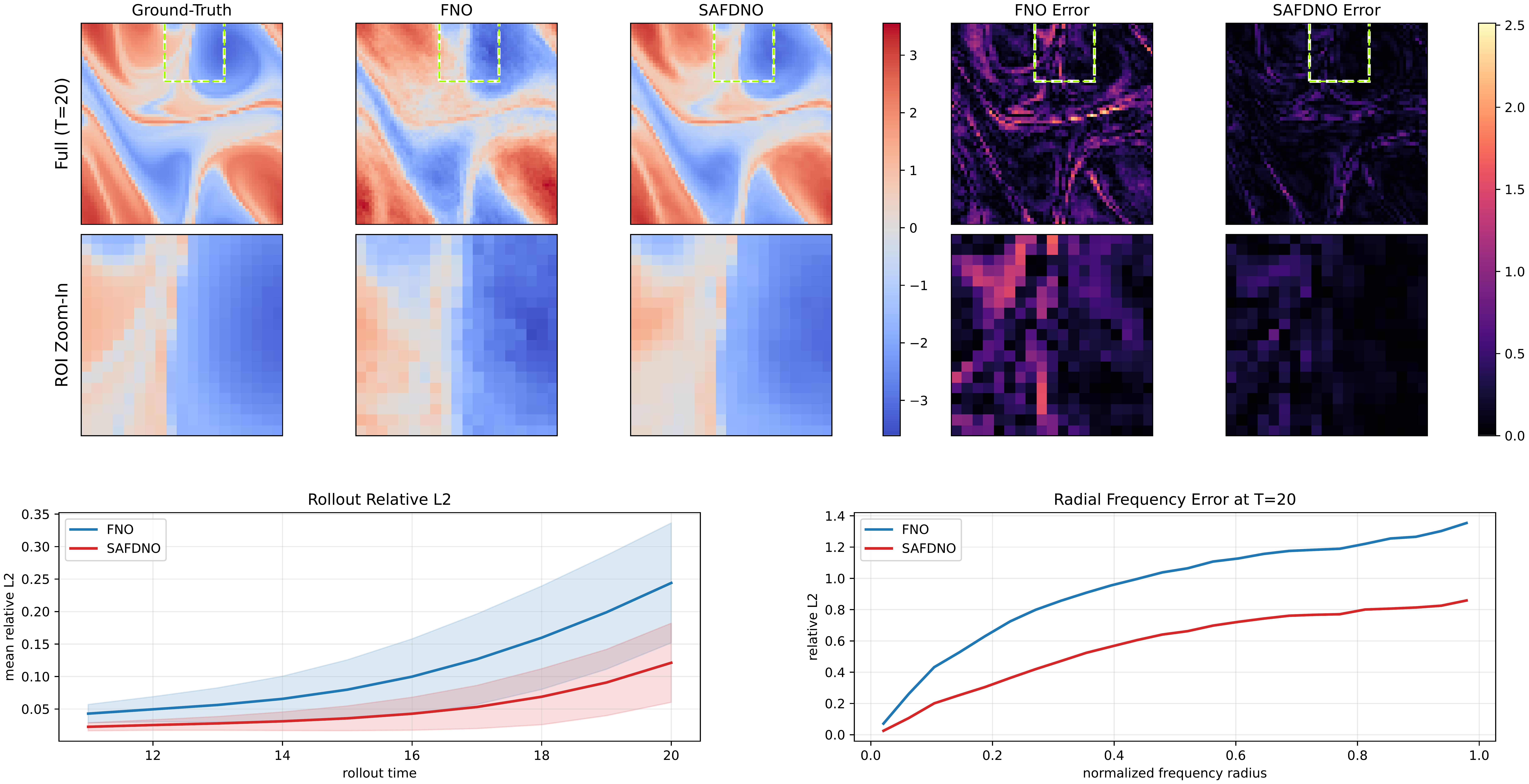}
    \caption{Performance on Navier-Stokes. Highlighted region indicates a representative area where SAFDNO handles multiscale structures better than FNO. Bottom plots show lower prediction errors for SAFDNO, especially for longer rollouts and higher frequencies.}
    \label{fig:multiscale_ns}
\end{figure}

Nevertheless, the use of frequency truncation and a fixed Fourier transform in FNO may hinder the expressive power and performance on certain PDEs. Specifically, frequency truncation may violate the equivalence of continuous and discrete representations in neural operators, thereby introducing aliasing effects \citep{gao2025discretizationinvariance}. In addition, truncating higher-frequency modes may induce spectral bias and limit the ability to capture fine-scale solution structures, particularly in PDEs with strong localized high-frequency oscillations \citep{amfno_xiao2024}. Moreover, the Fourier transform used in FNO inherently relies on the assumptions of Fourier analysis. As a result, dependence on a fixed Fourier basis may be inefficient for PDEs with non-periodic patterns, localized fine-scale structures, sharp discontinuities, and non-stationary dynamics \citep{pordanesh2025hilbertno,tang2025geometric}.

Recent studies have sought to address these limitations from several directions. Some extend FNO within the Fourier framework by improving Fourier-domain kernel operator layers to alleviate the spectral bias induced by fixed mode truncation \citep{amfno_xiao2024}, enriching Fourier layers with local convolution pathways to improve local and high-frequency representation \citep{wen2022u}, or extending FNO to irregular geometries through learned deformations \citep{geofno}. Others replace the Fourier transform with alternative predefined transforms or bases, such as wavelet, Laplace, or Hilbert representations, to better capture multiscale structures, transient signals, or phase information \citep{pordanesh2025hilbertno,tang2025geometric,CaoQianying2024Lnof,mwt,wno}. However, most existing approaches still rely on fixed spectral representations, rather than learning an input-adaptive basis from data, which may still inherently constrain the ability to learn non-periodic, localized high-frequency, non-stationary, and multiscale components.

To overcome the aforementioned limitations, we introduce the Stochastic Adaptive Fourier Decomposition Neural Operator (SAFDNO), motivated by the framework of Stochastic Adaptive Fourier Decomposition (SAFD) \citep{qian2022sparse,AFD_Qian,2DAFD_Qian,2DAFD_Gao_Qian,qian2011adaptive}. Unlike the Fourier transform, which expands signals over a fixed set of global sinusoidal bases, SAFD adaptively selects rational orthogonal functions associated with a Takenaka-Malmquist (TM) system according to input signals \citep{qian2022sparse,AFD_Qian}. Due to this adaptive mechanism, the SAFD method often exhibits faster energy convergence and can capture oscillatory structures and higher-frequency components more flexibly and efficiently than the fixed Fourier transform. Moreover, the SAFD method generally requires fewer poles than Fourier modes to achieve a comparable approximation error. These properties suggest that SAFD can provide a more flexible and parsimonious representation for PDE solution operators. As the representative example shown in Figure~\ref{fig:multiscale_ns}, Navier–Stokes turbulence manifests localized and multiscale dynamics, energy resides at large scales, while enstrophy concentrates on thin filamentary sets; our SAFDNO shows a stronger performance on learning non-stationary, high-frequency, and multiscale patterns.

Building on this perspective, our SAFDNO learns an input-dependent TM system representation directly from data, instead of relying on a predefined transform with explicit frequency truncation. In contrast to fixed spectral bases, the learned TM system adapts to input signals and allocates representation capacity adaptively according to pole placements, yielding a more flexible representation of the underlying structure for neural operator learning. Our main contributions are summarized as follows:
\begin{itemize}[leftmargin=*]
    \item We propose the Stochastic Adaptive Fourier Decomposition Neural Operator (SAFDNO), which incorporates 
    the adaptive Takenaka-Malmquist (TM) orthonormal system and Stochastic Adaptive Fourier Decomposition (SAFD) into spectral neural operator learning.
    \item We provide a theoretical analysis of the representation capacity of SAFD, proving its advantages over fixed Fourier bases on PDEs with localized and multiscale structures. The effectiveness of SAFDNO is empirically validated on multiple PDE benchmarks against strong baselines.  
\end{itemize}

\section{Related Work}

\paragraph{Operator learning.}
Operator learning is a data-driven framework that focuses on learning function-to-function mappings. DeepONet \citep{lu2021learning} is a representative architecture that parameterizes operators through branch and trunk networks. The FNO \citep{fno} parameterizes the global convolution by using a translation-invariant integral kernel in the Fourier domain and enables discretization-invariance. 
Attention-based operators, such as OFormer \citep{oformer} and GNOT \citep{gnot}, improve flexibility on irregular grids and arbitrary query locations. Our work is closely related to spectral neural operators.

\paragraph{Spectral neural operators.}
Building on FNO, a substantial body of work has advanced Fourier-based operator learning, yet still relies on the fixed Fourier transform. Geo-FNO \citep{geofno} extends FNO to arbitrary geometries by learning a deformation from the physical domain to a latent regular grid. F-FNO \citep{ffno} improves scalability through factorized spectral layers. AM-FNO \citep{amfno_xiao2024} alleviates the spectral bias caused by frequency truncation by using an amortized neural network to generate learnable weights for kernel parameterization, thereby allowing 
the model to accommodate arbitrarily many modes with a fixed number of parameters. DSFNO \citep{gao2025dynamic} further enhances expressiveness through input-dependent dynamic kernels and interactions among Fourier modes. However, these methods are intrinsically constrained by the inherent limitations of Fourier analysis and are therefore less suitable for capturing multiscale, spatially localized features or non-periodic structures. This limitation motivates a reconsideration of the underlying transform or basis. Wavelet-based methods, such as WNO \citep{wno}, exploit frequency localization to better capture spatially localized and fine-scale patterns. Multiwavelet-based operator learning projects the operator kernel onto multiwavelet polynomial bases and learns complex dependencies across multiple scales \citep{mwt}. The Laplace neural operator \citep{CaoQianying2024Lnof} represents the kernel integral operator in the Laplace domain using a pole-residue formulation between the input and solution spaces, thereby improving the modeling of non-periodic signals. Nevertheless, most existing works predominately rely on predefined transforms or fixed basis systems that are independent of the input, whereas input-adaptive basis construction remains relatively underexplored in operator learning.

\paragraph{Adaptive Fourier Decomposition.} Adaptive Fourier Decomposition (AFD) and its stochastic variant (SAFD) were originally introduced by Qian et al. \citep{qian2022sparse,AFD_Qian,qian2011adaptive} as signal analysis methods that employ adaptive rational orthonormal systems and have been shown to be effective in analyzing oscillatory and non-stationary signals. In machine learning, AFD has been used for classification \citep{TanChunyu2019ANSL} and feature extraction \citep{TanChunyu2021Anfr}, and SAFD has demonstrated strong compression performance in image analysis \citep{10416200,cheng2023dual,cheng2025deep} and video applications \citep{dai2022image,dai2022joint}. In operator learning, the Adaptive Mamba Operator (AMO) \citep{song2026adaptive} incorporates AFD by reparameterizing transfer functions as TM basis functions. Our SAFDNO learns the operator directly in the adaptive SAFD coefficient space.

\section{Preliminaries}
\subsection{Operator Learning}
Let $\Omega \subset \mathbb{R}^d$ be a bounded domain, $\mathcal{A} = \mathcal{A}(\Omega; \mathbb{R}^{d_a})$ be the input function space, and $\mathcal{U}=\mathcal{U}(\Omega; \mathbb{R}^{d_u})$ be the solution function space. Given a set of training pairs $\{ (a_i, u_i) \}_{i=1}^N \subset \mathcal{A} \times \mathcal{U}$, operator learning seeks a parametric map $\mathcal{G}_\theta: \mathcal{A} \rightarrow \mathcal{U}$ by solving
\begin{equation}
    \min_{\theta \in \mathbb{R}^p} \frac1N \sum_{i=1}^N \ell \big(\mathcal{G}_\theta(a_i), u_i \big). \label{eq:op_obj}
\end{equation}
In this work, we represent the overall architecture of a neural operator as compositions (see \citep{kovachki2023neuraloperator}),
\begin{equation}
    \mathcal{G}_\theta := \mathcal{P} \circ \mathcal{L}^{L} \circ \cdots \circ \mathcal{L}^{1} \circ \mathcal{Q},
    \label{eq:no_comp}
\end{equation}
where $\mathcal{Q}: \mathcal{A}(\Omega; \mathbb{R}^{d_a}) \rightarrow \mathcal{V}(\Omega; \mathbb{R}^{d_v})$ is the lifting layer, $\mathcal{L}^\ell: \mathcal{V}(\Omega; \mathbb{R}^{d_v}) \rightarrow \mathcal{V}(\Omega; \mathbb{R}^{d_v})$ for $\ell=1,\dots,L$ are operator blocks acting on a latent representation, and $\mathcal{P}: \mathcal{V}(\Omega; \mathbb{R}^{d_v}) \rightarrow \mathcal{U}(\Omega; \mathbb{R}^{d_u})$ is the projection layer.

Each $\mathcal{L}^{l}$ for $l=1, 2, \dots, L$ is generally defined as
\begin{equation}
  v^{(l+1)}(x) = \sigma \Big( W^{(l)} v^{(l)}(x) + \int_\Omega \kappa_\phi \big(x, y, a(x), a(y)\big) v^{(l)}(y) dy \Big) \quad \forall x \in \Omega.
\end{equation}
FNO \citep{fno} uses global convolution in the Fourier domain, and its kernel integral operator satisfies translation invariance $\kappa_\phi (x, y, a(x), a(y)) =\kappa_\phi (x - y)$. Under the translation-invariant assumption, the kernel operator becomes a convolution operator such that $(K_\phi v^{(l)})(x)=\int_\Omega \kappa_\phi (x - y) v^{(l)}(y) d y = \mathcal{F}^{-1} ( \mathcal{F}(\kappa_\phi) \cdot \mathcal{F}(v^{(l)}))(x)$, $\forall x \in \Omega$. With the Fourier integral operator, each FNO layer is formulated as
\begin{equation}\label{eq:FNO_layer}
    v^{(l+1)}(x) = \sigma \Big( W^{(l)} v^{(l)} (x) + \mathcal{F}^{-1} \big(\mathcal{R}_\phi^{l}(k) \cdot \mathcal{F}(v^{(l)}) (k) \big)(x) + b^{(l)} \Big) \quad \forall x \in \Omega.
\end{equation}
where $k$ is the frequency mode index and $\mathcal{R}_\phi^{l}(k)$ is a complex-valued learnable weight matrix for the Fourier mode $k$ that approximates the kernel $\kappa_{\phi}$ in the frequency domain.

\subsection{Stochastic Adaptive Fourier Decomposition}
Let $\mathbb{D}=\{z\in\mathbb{C}:|z|<1\}$ denote the open unit disk in the complex plane, and let $\mathbb{T}=\partial\mathbb{D}$ denote its boundary. 
The Hardy space $H^2(\mathbb{D})$ is the Hilbert space of 
holomorphic functions on  $\mathbb{D}$ of the form
$f(z)=\sum_{m\ge 0} c_m z^m$ whose Taylor coefficients satisfy 
$\sum_{m\ge 0}|c_m|^2<\infty$. It is equipped with the inner product
\begin{equation}
    \langle f,g\rangle_{H^2(\mathbb{D})}
    =
    \frac{1}{2\pi}\int_0^{2\pi} f(e^{it})\overline{g(e^{it})}\,dt.
\end{equation}
For $\tilde f\in L^2(\mathbb{T})$ with Fourier expansion
$\tilde f(e^{it})=\sum_{m\in\mathbb{Z}} c_m e^{imt}$, its Hardy projection is defined by
\begin{equation}
    f^+(z):=(\mathcal P_+\tilde f)(z)=\sum_{m\ge 0} c_m z^m \in H^2(\mathbb{D}),
    \qquad z\in\mathbb{D}.
\end{equation}
If $\tilde f$ is real-valued, its analytic signal is defined by
\begin{equation}
    \mathcal A\tilde f(e^{it})=\tilde f(e^{it})+i\mathcal H\tilde f(e^{it}),
\end{equation}
where $\mathcal{H}: L^2(\mathbb{T}) \rightarrow L^2(\mathbb{T})$ is the (periodic) Hilbert transform \citep{Intrinsic_mono_component_qian}. Clearly we have $\tilde f(e^{it})=\Re(\mathcal A\tilde f(e^{it}))$.

For $a\in\mathbb{D}$, called a pole, the normalized Szeg\H{o} kernel is
\begin{equation}
    e_a(z)=\frac{\sqrt{1-|a|^2}}{1-\overline a z},
    \qquad
    \langle f,e_a\rangle_{H^2(\mathbb{D})}
    =
    \sqrt{1-|a|^2}\,f(a).
\end{equation}
Given a pole sequence $\mathbf{a}=(a_1,\dots,a_K)\in\mathbb{D}^K$, the associated Takenaka-Malmquist (TM) system is
\begin{equation}
    B_1(z; \mathbf{a})=e_{a_1}(z),
    \qquad
    B_k(z; \mathbf{a})
    =
    e_{a_k}(z)\prod_{m=1}^{k-1}\frac{z-a_m}{1-\overline{a_m}z},
    \quad k\ge 2. \label{eq:1d_tm_system}
\end{equation}

Classical AFD selects poles recursively according to the maximal selection principle \citep{AFD_Qian,2DAFD_Qian} starting from $f_1$, the larger is $\big|\langle f_k,e_a\rangle_{H^2(\mathbb D)}\big|^2$, the smaller is the energy of the reduced remainder $f_{k+1}$,
\begin{equation}
    a_k\in
    \arg\max_{a\in\mathbb D}
    \big|\langle f_k,e_a\rangle_{H^2(\mathbb D)}\big|^2
    =
    \arg\max_{a\in\mathbb D}
    (1-|a|^2)|f_k(a)|^2.
\end{equation}
Let $\omega \in \Omega$ denote a signal or sample; the stochastic AFD instead maximizes the expected energy across multiple signals
\begin{equation}
    a_k\in
    \arg\max_{a\in\mathbb D}
    \mathbb E_\omega\!\left[
        \big|\langle f_k(\cdot,\omega),e_a\rangle_{H^2(\mathbb D)}\big|^2
    \right].
\end{equation}

\section{Stochastic Adaptive Fourier Decomposition for Operator Learning}\label{sec:safdno}
Building on the SAFD framework, this section introduces the proposed Stochastic Adaptive Fourier Decomposition Neural Operator (SAFDNO). SAFDNO can be viewed as an 
adaptive neural operator that replaces the fixed Fourier basis used in FNO \citep{fno} with a data-dependent TM 
system. Specifically, instead of using a fixed Fourier basis with hard frequency truncation, the model first predicts poles adaptively in the unit disk, and then constructs the corresponding TM 
system according to adaptive pole placements. The key distinction from FNO is that the spectral components in SAFDNO are not fixed a priori but are adaptively inferred from the data, and the pole placements determine the localization strength of the resulting TM representations, enabling a more flexible and data-aligned representation.


\subsection{Neural Pole Predictor}
As discussed earlier, the classic stochastic AFD selects poles sequentially with a greedy procedure \citep{AFD_Qian,2DAFD_Qian}, which chooses the pole that captures the largest amount of expected energy at each step. This recursive pole selection process is computationally expensive for operator learning. To address this issue, 
SAFDNO replaces this greedy search with a neural pole predictor that amortizes pole selection across samples. For a problem posed on a domain \(\Omega\subset\mathbb{R}^d\), given the hidden representation \(v\), we predict a collection of pole sequences shared across hidden channels
\begin{equation}
    \mathbf{a}=\bigl(a^{(1)},a^{(2)},\dots,a^{(d)}\bigr) = \Psi_\theta(v),
    \label{eq:pole_predictor}
\end{equation}
where \(\Psi_\theta\) is the amortized neural pole predictor, and,
for each dimension \(r=1,\dots,d\), the corresponding pole sequence is
\[
\mathbf{a}^{(r)}=\bigl(a^{(r)}_1,\dots,a^{(r)}_K\bigr)\in\mathbb{D}^K.
\]
\begin{wrapfigure}{r}{0.38\textwidth}
  \centering
  \includegraphics[width=0.38\textwidth]{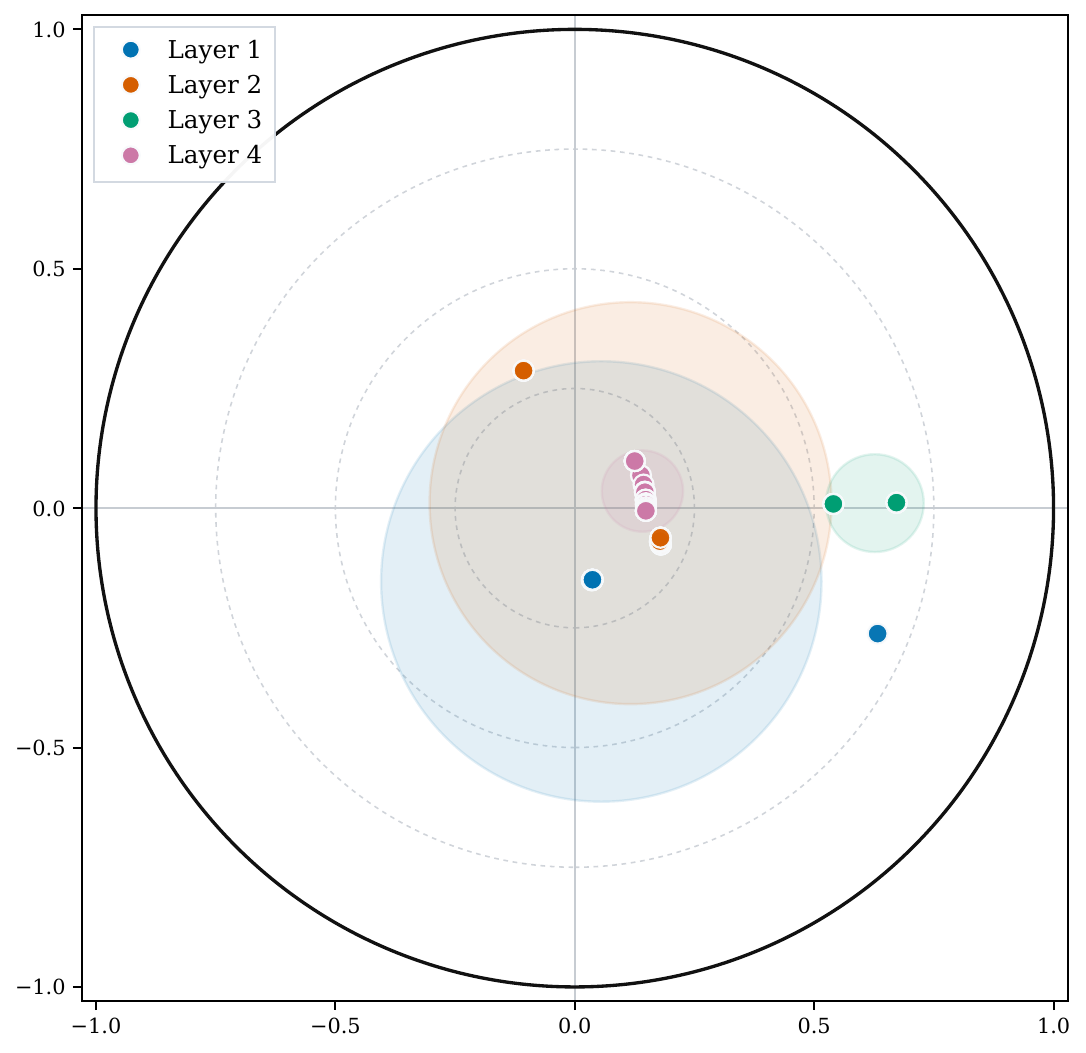}
  \caption{Learned poles on the Darcy flow. Representations for deeper layers are increasingly global, while earlier layers focus on localized and sharp structures.}
  \label{fig:pole_darcy}
\end{wrapfigure}
Specifically, we first map the hidden representation $v$ into a set of latent tokens $X=[x_1,\dots,x_T]$, where each token $x_t$ is formed from the local feature $v_t$ together with its positional embedding. Let $\{\xi_t\}_{t=1}^T \subset [0, 1]^d$ denote the normalized coordinates of the tokens, and $\gamma(\xi_t)$ be the corresponding Fourier positional embedding. Then, each token embedding $x_t$ is obtained by linear projection such that
\begin{equation}
    x_t = W[v_t \| \gamma(\xi_t)] + b,
\end{equation}
where $\|$ denotes the concatenation operation. Let $S \in \mathbb{R}^{dK \times d_h}$ be the learnable pole slots, and we implement \(\Psi_\theta\) as a query-based cross-attention module that is inherently robust to varying discretizations, in which the $dK$ learnable pole slots serve as queries that aggregate information from the $T$ input tokens to produce $dK$ adaptive poles, controlling the number of poles independent of arbitrary discretizations $T$. For each block \(\ell=1,\dots,L\) and an initial $S^{(0)}$, we update the pole slots by
\begin{align*}
\mathrm{head}^{(\ell)}_h
&=
\mathrm{Attention}\big(
S^{(\ell-1)}W_h^{Q,(\ell)},
XW_h^{K,(\ell)},
XW_h^{V,(\ell)}
\big).\\
U^{(\ell)}
&=
[\mathrm{head}^{(\ell)}_1 \| \dots \|\mathrm{head}^{(\ell)}_H]W^{O,(\ell)},\\
\widehat S^{(\ell)} &= S^{(\ell-1)} + U^{(\ell)},\\
S^{(\ell)} &= \widehat S^{(\ell)} + \mathrm{MLP}(\widehat S^{(\ell)}).
\end{align*}
After $L$ steps, the final slot states $S^{(L)}$ are mapped to the raw pole parameters through a slot-wise MLP $g_\theta$ such that
\begin{equation}
    P = g_\theta (S^{(L)})\in\mathbb{R}^{dK\times 2},
\end{equation}
where $P= [P^{(1)};P^{(2)};\dots;P^{(d)}]$ and $P^{(r)} = [(\widehat{\rho}^{(r)}_k,\widehat{\phi}^{(r)}_k)^K_{k=1}]\in\mathbb{R}^{K\times 2}$.  
Finally, we construct each pole from the output $(\widehat{\rho}^{(r)}_k,\widehat{\phi}^{(r)}_k)$ by the sigmoid and tanh activations and scaling as follows
\begin{equation}
\rho^{(r)}_k = (1-\varepsilon)\sigma (\widehat{\rho}^{(r)}_k),
\quad
\phi^{(r)}_k = \pi \tanh (\widehat{\phi}^{(r)}_k),
\quad
a^{(r)}_k = \rho^{(r)}_k e^{\mathrm{i}\phi^{(r)}_k},
\end{equation}
where 
$\varepsilon>0$ is a small constant ensuring each \(a^{(r)}_k\in\mathbb{D}\) for $k=1, 2, ..., K$ and $r=1, 2, ..., d$. $\rho_k^{(r)}$ determines the pole radius, while $\phi_k^{(r)}$ determines its angular position on the unit disk.

\subsection{Takenaka-Malmquist System}
The Takenaka-Malmquist (TM) system is an orthonormal system of rational functions in Hardy space $H^2(\mathbb{T})$ and is constructed using the pole sequence $\{ a_k \}_{k=1}^K$ within the unit disk; see Eq.~\eqref{eq:1d_tm_system}. As adaptive poles parameterize the TM system and control its localization, the resulting TM system is therefore data-adaptive and can adapt to non-stationary and localized patterns efficiently.

\begin{remark}[Relation between TM system and Fourier basis]
\label{rem:tm_fourier_1d}
If all poles are placed at the origin $\mathbf{a}=(0, 0, ..., 0) = \mathbf{0}$, that is, $a_k\equiv 0$ for $k=1,\dots,K$, then
\[
    B_1(z;\mathbf{0})=1,
    \qquad
    B_k(z;\mathbf{0})=z^{k-1},\qquad 2\le k\le K.
\]
Hence, the one-dimensional TM system Eq.~\eqref{eq:1d_tm_system} reduces to the nonnegative frequency Fourier basis of $H^2(\mathbb{T})$ \citep{AFD_Qian}. Moreover, for a nonzero pole $a=\rho e^{i\phi}$,
\[
    |e_a(e^{it})|^2=\left|\frac{\sqrt{1-|a|^2}}{1-\overline{a}e^{it}}\right|^2 = \frac{1-\rho^2}{1+\rho^2-2\rho\cos(t-\phi)},
\]
which is maximized at $t=\phi$ and becomes increasingly concentrated there as $\rho\to 1^-$. This explains why TM atoms can adapt to localized or nonstationary structures more efficiently than globally fixed Fourier modes.
\end{remark}

In our 
multi-dimensional case of $d>1$, $\mathbf{a}=(\mathbf{a}^{(1)},\mathbf{a}^{(2)},\dots,\mathbf{a}^{(d)})$ 
and each $\mathbf{a}^{(r)}=(a_1^{(r)},\dots,a_K^{(r)})\in \mathbb{D}^K$ ($r=1, 2, \dots, d$), the product-TM system with $k=(k_1,\dots,k_d)\in [K]^d$ and $z=(z_1,\dots,z_d) \in \mathbb{D}^d$ is formed by taking the tensor product of the one-dimensional TM systems,
\begin{equation}\label{eq:product_tm}
    B_k(z;\mathbf{a})=\prod_{r=1}^d B_{k_r}(z_r;\mathbf{a}^{(r)}).
\end{equation}

\begin{proposition}[Orthonormality of the product-TM system]
\label{prop:product_tm_orth}
The family $\{B_k(\cdot;\mathbf{a})\}_{k\in[K]^d}$ defined by Eq.~\eqref{eq:product_tm} is orthonormal in $H^2(\mathbb{T}^d)$. Equivalently, for any $k=(k_1,\dots,k_d)$ and $\ell=(\ell_1,\dots,\ell_d)$ in $[K]^d$,
\[
    \langle B_k(\cdot;\mathbf{a}),B_\ell(\cdot;\mathbf{a})\rangle_{H^2(\mathbb{T}^d)}
    =\prod_{r=1}^d \delta_{k_r\ell_r}.
\]
Consequently, the Bessel inequality holds:
\[
    \sum_{k\in[K]^d}\big|\langle v,B_k(\cdot;a)\rangle_{H^2(\mathbb{T}^d)}\big|^2
    \le \|v\|_{H^2(\mathbb{T}^d)}^2,
    \qquad \forall v\in H^2(\mathbb{T}^d).
\]
\end{proposition}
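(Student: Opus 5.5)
The plan is to reduce everything to the one-dimensional statement and then use the product structure of the inner product on $\mathbb{T}^d$.

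\textbf{Step 1: the one-dimensional case.} For a single pole sequence $\mathbf{a}^{(r)}\in\mathbb{D}^K$ I will show that
\[
\langle B_k(\cdot;\mathbf{a}^{(r)}),B_\ell(\cdot;\mathbf{a}^{(r)})\rangle_{H^2(\mathbb{T})}=\delta_{k\ell}.
\]

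\emph{Normalization.} Write $\tau_a(z)=(z-a)/(1-\overline{a}z)$. Each $\tau_a$ is a Blaschke factor, so $|\tau_a|=1$ on $\mathbb{T}$. Hence $|B_k|=|e_{a_k}|$ on $\mathbb{T}$, and
\[
\|B_k\|^2=\|e_{a_k}\|^2=\sqrt{1-|a_k|^2}\,e_{a_k}(a_k)=1,
\]
by the reproducing property recalled earlier.

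\emph{Orthogonality.} Take $k>\ell$. The common factor $\prod_{m<\ell}\tau_{a_m}$ has unimodular boundary values, so it cancels in the boundary integral. Using $\overline{\tau_{a_\ell}}=1/\tau_{a_\ell}$ on $\mathbb{T}$, the inner product becomes
\[
\langle B_k,B_\ell\rangle=\Big\langle e_{a_k}\prod_{m=\ell+1}^{k-1}\tau_{a_m}\cdot \tau_{a_\ell},\,e_{a_\ell}\Big\rangle.
\]
The left factor $g$ lies in $H^2$ and vanishes at $a_\ell$ because of the factor $\tau_{a_\ell}$. The reproducing property then gives $\langle g,e_{a_\ell}\rangle=\sqrt{1-|a_\ell|^2}\,g(a_\ell)=0$. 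The case $k<\ell$ follows by conjugate symmetry.

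The one point requiring care is repeated poles, $a_\ell=a_m$ with $m\ne\ell$. The argument above only uses the zero of $\tau_{a_\ell}$ at $a_\ell$ and the reproducing identity, so it still applies. This is the step I expect to be the main (if mild) obstacle: checking that the cancellation is legitimate on the boundary and does not secretly assume distinct poles.

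\textbf{Step 2: the product.} On $H^2(\mathbb{T}^d)$ the inner product is integration against normalized Haar measure $\prod_r dt_r/2\pi$. Each $B_k(\cdot;\mathbf{a})$ is a product of functions of separate variables, so Fubini gives
\[
\langle B_k,B_\ell\rangle_{H^2(\mathbb{T}^d)}=\prod_{r=1}^d\langle B_{k_r}(\cdot;\mathbf{a}^{(r)}),B_{\ell_r}(\cdot;\mathbf{a}^{(r)})\rangle_{H^2(\mathbb{T})}.
\]
Fubini applies because each factor is bounded on $\mathbb{T}$: its poles $1/\overline{a}$ lie outside the closed disk. By Step 1 the right-hand side equals $\prod_r\delta_{k_r\ell_r}$. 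Each $B_k$ is also holomorphic in each variable on $\mathbb{D}^d$ with square-summable coefficients, so it indeed lies in $H^2(\mathbb{T}^d)$.

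\textbf{Step 3: Bessel.} For $v\in H^2(\mathbb{T}^d)$, let $Pv=\sum_{k\in[K]^d}\langle v,B_k\rangle B_k$ be the orthogonal projection onto the finite span of the orthonormal family. Then $v-Pv$ is orthogonal to $Pv$, and the Pythagorean identity gives
\[
\|v\|^2=\|Pv\|^2+\|v-Pv\|^2\ge\|Pv\|^2=\sum_{k\in[K]^d}|\langle v,B_k\rangle|^2,
\]
which is the Bessel inequality.
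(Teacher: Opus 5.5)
Your proposal is correct and follows the paper's route: Fubini factorizes the $d$-dimensional inner product into one-dimensional TM inner products, and the Bessel inequality follows from orthonormality (your projection argument). The only difference is that you prove the one-dimensional orthonormality yourself, using Blaschke-factor unimodularity and the reproducing property at $a_\ell$ (which also covers repeated poles), whereas this proof in the paper cites the AFD literature for it; the paper's later AFD convergence theorem gives essentially your argument.
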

The proof of Proposition~\ref{prop:product_tm_orth} is given in Appendix~\ref{app:product_tm_orth_full_proof}

\subsection{Stochastic Adaptive Fourier Decomposition Operator}
Let $\sigma \in \Sigma_d$ index the analytic signal branches of the computational coordinate representation $v$ on the torus $\mathbb{T}^d$, where
\begin{equation}
    \Sigma_1=\{+\}, \quad \Sigma_2=\{++, +-\}, \quad \Sigma_3=\{+++, ++-, +-+, +--\}.
\end{equation}
For real-valued signals, the remaining branches are determined by conjugate symmetry. Then, we have $\mathcal{A}_\sigma:=\mathcal{A}_{\sigma_1}^{(1)}\cdots\mathcal{A}_{\sigma_d}^{(d)}$, and we use $v^\sigma=\mathcal{A}_\sigma v$ to denote the analytic component of $v$ for $\sigma \in \Sigma_d$ accordingly. Additionally, omitted frequency parts are recovered by conjugate symmetry, so averaging over $\Sigma_d$ and taking the real part returns real-valued signals.

For $\sigma\in\Sigma_d$ and $k\in[K]^d$, the adaptive SAFD spectral coefficient is defined as
\begin{equation}\label{eq:tm_coeff}
    C^\sigma_k = \langle v^\sigma , B_k (z; \mathbf{a})\rangle_{H^2},
\end{equation}
where the inner product is taken channel-wise. Thus, $C^\sigma_k \in \mathbb{C}^{d_v}$ is the complex expansion coefficient associated with the $k$-th adaptive TM mode on the analytic signal branch $\sigma$.
For these coefficients in the adaptive spectral domain, we define a neural network for filtering. Specifically, for each $k\in[K]^d$, we introduce a learnable linear transform
\begin{equation}\label{eq:afd_kernel}
    R_k(\mathbf{a};\theta):=\psi_\theta(k;\mathbf{a})\in\mathbb{C}^{d_v\times d_v},
\end{equation}
where $\psi_\theta$ is a neural network conditioned on the pole collection $a$. In this work, we use a sinusoidal representation network (SIREN) \citep{siren} as $\psi_\theta$. The corresponding SAFD kernel integral operator is then defined by
\begin{equation}\label{eq:afd_operator}
    (\mathcal{K}(\mathbf{a};\theta)v)(z)
    :=\Re\left(
        \frac{1}{|\Sigma_d|}
        \sum_{\sigma\in\Sigma_d}
        \sum_{k\in[K]^d}
        R_k(\mathbf{a};\theta)\,C_k^\sigma\,B_k(z;\mathbf{a})
    \right).
\end{equation}
Taking the real part of the combined branch reconstructions yields a real-valued representation on the computational torus $\mathbb{T}^d$. For fixed $\mathbf{a}$ and $\theta$, the resulting operator is well-defined and bounded on $L^2(\mathbb{T}^d;\mathbb{R}^{d_v})$, as stated below.
\begin{proposition}[Well-posedness of the SAFD kernel integral operator]
\label{prop:afd_kernel_bounded}
Fix a pole collection $\mathbf{a}$ and parameters $\theta$, and define
\[
    M(\mathbf{a},\theta):=\max_{k\in[K]^d}\|R_k(\mathbf{a};\theta)\|_{\mathrm{op}}.
\]
Then the operator $\mathcal{K}(\mathbf{a};\theta):L^2(\mathbb{T}^d;\mathbb{R}^{d_v})\to L^2(\mathbb{T}^d;\mathbb{R}^{d_v})$ defined by Eq.~\eqref{eq:afd_operator} is well-defined, $\mathbb{R}$-linear, and bounded. 
Moreover,
\[
    \|\mathcal{K}(\mathbf{a};\theta)v\|_{L^2}
    \le 2^d M(\mathbf{a},\theta)\,\|v\|_{L^2},
    \qquad
    \forall v\in L^2(\mathbb{T}^d;\mathbb{R}^{d_v}).
\]
\end{proposition}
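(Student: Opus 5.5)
The argument runs in three steps: bound each analytic branch \(v^\sigma\) in terms of \(v\), bound the filtered reconstruction on a single branch using orthonormality (Proposition~\ref{prop:product_tm_orth}), and then average over \(\Sigma_d\) and take real parts.

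\textbf{Step 1 (branches).} Take each one-dimensional analytic-signal operator to be \(\mathcal A^{(r)}_{+} = I + i\mathcal H^{(r)}\) in the \(r\)-th variable, and \(\mathcal A^{(r)}_{-}\) its conjugate-side analogue. On Fourier coefficients this multiplier equals \(2\) on positive frequencies, \(1\) at zero frequency (in the usual convention), and \(0\) on negative frequencies. Its \(L^2(\mathbb{T})\) operator norm is therefore at most \(2\). The product \(\mathcal A_\sigma\) acts as a tensor product of multipliers in separate variables, so by Parseval on \(\mathbb{T}^d\)
\[
\|v^\sigma\|_{L^2}\le 2^d\|v\|_{L^2}
\]
channel-wise, and hence for the vector-valued norm. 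In the same step we check that \(v^\sigma\) lies in the appropriate Hardy-type space on \(\mathbb{T}^d\). For the \(-\) coordinates this holds after the conjugation \(z_r\mapsto\bar z_r\), which is an isometry. This makes the coefficients \(C_k^\sigma\) in Eq.~\eqref{eq:tm_coeff} well defined, and \(\mathbb{R}\)-linear in \(v\), since \(\mathcal H\) is real-linear.

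\textbf{Step 2 (one branch).} Fix \(\sigma\) and set \(w_\sigma=\sum_{k\in[K]^d}R_k C_k^\sigma B_k\). This is a finite sum of functions in \(L^2\), so it is well defined. By Proposition~\ref{prop:product_tm_orth} the \(B_k\) are orthonormal, and applying this coordinatewise in \(\mathbb{C}^{d_v}\) gives
\[
\|w_\sigma\|^2=\sum_k |R_kC_k^\sigma|^2\le M^2\sum_k|C_k^\sigma|^2 .
\]
The Bessel inequality from the same proposition, applied channel-wise, then yields
\[
\|w_\sigma\|\le M\|v^\sigma\|\le 2^dM\|v\|.
\]

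\textbf{Step 3 (averaging).} By the triangle inequality,
\[
\Bigl\|\tfrac{1}{|\Sigma_d|}\textstyle\sum_\sigma w_\sigma\Bigr\|\le \max_\sigma\|w_\sigma\|,
\]
and taking the real part is pointwise \(1\)-Lipschitz in \(L^2\), since \(|\Re z|\le |z|\). Combining these gives \(\|\mathcal K v\|\le 2^dM\|v\|\). The result is real-valued by construction. Real-linearity follows because \(v\mapsto C_k^\sigma\) is \(\mathbb{R}\)-linear and the maps \(R_k\) and \(\Re\) are \(\mathbb{R}\)-linear.

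\textbf{Main obstacle.} The delicate point is Step 1: one must pin down the normalisation of \(\mathcal A_\sigma\) and the treatment of zero modes in each variable so that the constant is exactly \(2^d\). One must also justify that the mixed-sign branches can be moved into \(H^2(\mathbb{T}^d)\) by reflecting coordinates, so that Proposition~\ref{prop:product_tm_orth} applies. I would first do this by direct Fourier-multiplier computation. If the convention instead gives the multiplier \(2\) at zero frequency too, the bound \(2^d\) is unaffected.
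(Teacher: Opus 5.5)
Your proposal is correct and follows essentially the same route as the paper's proof. The paper also proceeds in three steps. First, a Fourier-multiplier computation, with $\mathcal{A}^{(r)}_-=\mathcal{R}_r(I-i\mathcal{H}_r)$ and multiplier $1+\operatorname{sgn}(n_r)$, gives $\mathcal{A}_\sigma v\in H^2(\mathbb{T}^d)$ and $\|\mathcal{A}_\sigma v\|_{L^2}\le 2^d\|v\|_{L^2}$. Next, orthonormality together with Bessel's inequality yields $\|u^\sigma\|_{L^2}\le M\|v^\sigma\|_{L^2}$. Finally, the triangle inequality over $\Sigma_d$ and $|\Re w|\le |w|$ complete the bound.
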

The proof of Proposition~\ref{prop:afd_kernel_bounded} is given in Appendix~\ref{app:afd_kernel_bounded_full}.

Following \citep{amfno_xiao2024,ffno}, each SAFDNO layer is updated by
\begin{align}
    v^{(l+1)}(z)
    =v^{(l)}(z)
    +W_2^{(l)}\,\phi\!\left(
        W_1^{(l)}\bigl(\mathcal{K}(\mathbf{a}^{(l)};\theta_r)v^{(l)}\bigr)(z)+b_1^{(l)}
    \right)
    +b_2^{(l)}.
\end{align}
For structured-mesh PDEs, one may augment the representation
\begin{equation}
    v^{(l+1)}(z)
    =
    v^{(l)}(z)
    +
    W_2^{(l)}
    \phi\!\left(
        W_1^{(l)}
        \bigl[
        s^{(l)}(z),\{\delta_r s^{(l)}(z)\}_{r=1}^{d},\Delta_h s^{(l)}(z)
        \bigr]
        +
        b_1^{(l)}
    \right)
    +
    b_2^{(l)},
\end{equation}
where we denote $s^{(l)}(z):= (\mathcal{K}(\mathbf{a}^{(l)};\theta_r)v^{(l)})(z)$ for simplicity, $\delta_r s^{(l)}(z)$ denotes the discrete first-order central difference operator along the $r$-th spatial dimension, and $\Delta_h s^{(l)}(z)$ is the discrete Laplacian operator.
S

\section{Experiments}
This section presents a comprehensive empirical evaluation of the proposed SAFDNO against several neural operator baselines on a range of standard PDE benchmarks. 

\subsection{Implementation Details}\label{sec:implementation}
All experiments are conducted using the AdamW optimizer for 500 training epochs with a learning rate of $1e-3$. The weight decay is set to $1e-5$ for the experiments in Table~\ref{tab:main_results_1} and $1e-4$ for those in Table~\ref{tab:main_results_2}. For the learning rate schedule, we use the one-cycle policy for all experiments in Table~\ref{tab:main_results_1} and cosine annealing for all experiments in Table~\ref{tab:main_results_2}. All experiments are performed on a single NVIDIA RTX 5090 GPU. Detailed experiment setups are presented in Appendix~\ref{app:detaled_exp_setups}.

\paragraph{Benchmarks and Baselines.} We use several popular PDE benchmark datasets, including Darcy and Navier-Stokes \citep{fno,kossaifi2025librarylearningneuraloperators}; Pipe, Airfoil, and Plasticity in \citep{geofno}. In addition, we also employ the Burgers equation, 1D Computational Fluid Dynamics (CFD), 2D CFD, and Reaction-Diffusion from PDEBench \citep{takamoto2022pdebench}. As the proposed SAFDNO is a spectral neural operator, we mainly compare SAFDNO with DeepONet \citep{lu2021learning}, FNO \citep{fno}, MWT \citep{mwt}, U-FNO \citep{wen2022u}, Geo-FNO \citep{geofno}, U-NO \citep{rahman2023uno}, F-FNO \citep{ffno}, LSM \citep{wu2023LSM}, and AM-FNO \citep{amfno_xiao2024}. Additionally, we also consider several attention-based neural operators, including Galerkin \citep{cao2021choose-a57}, OFormer \citep{oformer}, GNOT \citep{gnot}, ONO \citep{xiao2024improved}, and Transolver \citep{wu2024Transolver}.


\paragraph{Evaluation Metric.} The $\ell_2$ relative error is used as a metric for training and evaluation. Given data pairs $\{a_i, u_i \}_{i=1}^N$ and a model $\mathcal{G}_\theta: \mathcal{A} \rightarrow \mathcal{U}$, the $\ell_2$ relative error is computed as
\begin{equation}
    L (\mathcal{G}_\theta(a), u) = \frac1N \sum_{i=1}^N \frac{\|\mathcal{G}_\theta(a_i) - u_i \|^2}{\| u_i \|^2}.
\end{equation}

\subsection{Main Results}\label{sec:main_results}
Tables~\ref{tab:main_results_1} and~\ref{tab:main_results_2} report the main results across nine PDE benchmarks. Overall, SAFDNO achieves the best performance on all six regular-grid benchmarks, demonstrating consistent improvements, especially on problems for which a fixed Fourier basis is known to be restrictive. On the Burgers equation, whose solutions develop sharp discontinuities requiring information from a wide spectrum of modes, SAFDNO achieves a relative $\ell_2$ error of $5.79e$-$3$, reducing the error of FNO by $48.3\%$. These results are consistent with our motivation and Remark~\ref{rem:tm_fourier_1d}, which suggest that the adaptive TM system can allocate representation capacity toward localized high-frequency structures without the information loss incurred by hard frequency truncation (see Figures~\ref{fig:multiscale_ns} and~\ref{fig:localized_darcy}).

\begin{table}[htbp]
\centering
\caption{Test relative $\ell_2$ error on five standard PDE benchmarks. $^\dagger$ denotes ablation variants. / indicates that FNO is not applicable to this task.}
\begin{tabular}{l|c c c c c}
\toprule
\multicolumn{1}{c|}{\textsc{Model}} & \multicolumn{3}{c}{\textsc{Structured Mesh}} & \multicolumn{2}{c}{\textsc{Regular Grid}} \\
\cmidrule(lr){2-4}\cmidrule(lr){5-6}
\multicolumn{1}{c|}{} & \textsc{Plasticity} & \textsc{Airfoil} & \textsc{Pipe} & \textsc{Navier-Stokes} & \textsc{Darcy} \\
\midrule
\textsc{DeepONet}    &  0.0079   & 0.0130 & 0.0097  & 0.4508 & 0.0411 \\
\textsc{FNO}         & /      & /      & /      & 0.1468 & 0.0108 \\
\textsc{MWT}         & 0.0076 & 0.0075 & 0.0077 & 0.1541 & 0.0082 \\
\textsc{U-FNO}       & 0.0039 & 0.0269 & 0.0056 & 0.2231 & 0.0183 \\
\textsc{Geo-FNO}     & 0.0074 & 0.0138 & 0.0067 & 0.1556 & 0.0108 \\
\textsc{U-NO}        & 0.0034 & 0.0078 & 0.0100 & 0.1713 & 0.0113 \\
\textsc{F-FNO}       & 0.0047 & 0.0078 & 0.0070 & 0.2322 & 0.0077 \\
\textsc{LSM}         & 0.0025 & 0.0059 & 0.0050 & 0.1535 & 0.0065 \\
\midrule
\textsc{Galerkin}    & 0.0120 & 0.0118 & 0.0098 & 0.1401 & 0.0084 \\
\textsc{OFormer}     & \underline{0.0017} & 0.0183 & 0.0168 & 0.1705 & 0.0124 \\
\textsc{GNOT}        & 0.0336 & 0.0076 & 0.0047 & 0.1380 & 0.0105 \\
\textsc{ONO}         & 0.0048 & 0.0061 & 0.0052 & 0.1195 & 0.0076 \\
\textsc{Transolver}   & \textbf{0.0012} & \textbf{0.0053} & \textbf{0.0033} & \underline{0.0900} & \underline{0.0057} \\
\midrule
\textbf{$^\dagger$ SAFD + Eq.~\eqref{eq:FNO_layer}} & / & / & / & 0.1344 & 0.0091\\
\textbf{$^\dagger$ SAFD + SIREN + Eq.~\eqref{eq:FNO_layer}} & / & / & / & 0.0962 & 0.0064\\
\textbf{SAFDNO (Ours)}       & 0.0039 & \underline{0.0057} & \underline{0.0047} & \textbf{0.0684} & \textbf{0.0054} \\
\bottomrule
\end{tabular}
\label{tab:main_results_1}
\end{table}

For Navier-Stokes and Darcy flow in Table~\ref{tab:main_results_1}, SAFDNO also outperforms all baselines, including attention-based neural operators, such as Transolver. This is particularly remarkable since Navier-Stokes solutions exhibit complex multiscale vortical structures that can benefit from the input-adaptive spectral representation provided by SAFDNO, as shown in Figure~\ref{fig:multiscale_ns}.

\begin{figure}[htbp]
    \centering
    \includegraphics[width=\linewidth]{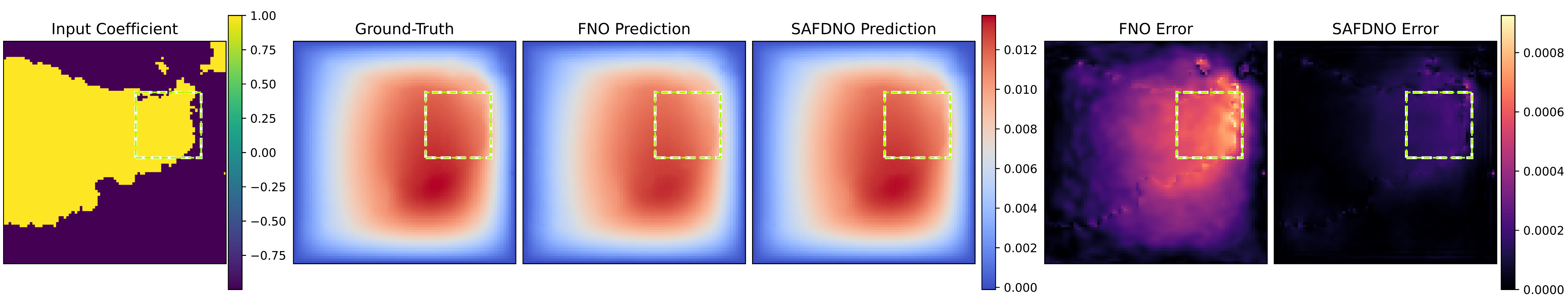}
    \caption{Localized pattern on Darcy flow. The highlighted region indicates a representative localized pattern where SAFDNO better matches the ground truth.}
    \label{fig:localized_darcy}
\end{figure}

For structured-mesh problems, attention-based operators such as Transolver remain generally superior to spectral neural operators, as attention-based operators do not assume grid regularity, whereas spectral methods generally require a regular grid for basis expansion. Nevertheless, our SAFDNO remains competitive overall, 
especially when compared with other spectral operator baselines, such as U-FNO, LSM, etc. The visualizations of the prediction performance are given in Appendix~\ref{app:performance_visual}. We provide additional results, including efficiency analysis between SAFDNO and FNO, in Appendix~\ref{app:additional_results}.
\begin{table}[!htbp]
\centering
\caption{Test relative $\ell_2$ error performance on additional regular-grid PDEs. $^\dagger$ denotes ablation variants.}
\begin{tabular}{l|c c c c}
\toprule
 & \textsc{Burgers} & \textsc{1D CFD} & \textsc{2D CFD} & \textsc{Reaction-Diffusion}\\
\midrule
\textsc{DeepONet} & $2.62e$-$1$ & $1.99e$-$2$ & $2.76e$-$2$ & $5.08e$-$4$\\
\textsc{FNO}      & $1.12e$-$2$ & $8.25e$-$3$ & $1.66e$-$2$ & $1.17e$-$4$\\
\textsc{U-FNO}    & $1.70e$-$2$ & $8.27e$-$3$ & $1.17e$-$2$ & $1.75e$-$4$\\
\textsc{F-FNO}    & $1.55e$-$2$ & $8.80e$-$3$ & $1.69e$-$2$ & $8.47e$-$5$ \\
\textsc{U-NO}     & \underline{$7.53e$-$3$} & $7.72e$-$3$ & $1.57e$-$2$ & $7.66e$-$5$\\
\textsc{AM-FNO (MLP)}   & $8.92e$-$3$ & $7.66e$-$3$ & $1.21e$-$2$ & $1.78e$-$3$\\
\midrule
\textbf{$^\dagger$ SAFD + Eq.~\eqref{eq:FNO_layer}} & $9.08e$-$3$ & $6.64e$-$3$ & $1.28e$-$2$ & $6.02e$-$5$\\
\textbf{$^\dagger$ SAFD + SIREN + Eq.~\eqref{eq:FNO_layer}} & $7.63e$-$3$ & \underline{$5.89e$-$3$} & \underline{$6.48e$-$3$} & \underline{$5.20e$-$5$}\\
\textbf{SAFDNO (Ours)}    & \textbf{5.79e-3} & \textbf{5.72e-3} & \textbf{6.44e-3} & 
\textbf{4.59e-5}\\
\bottomrule
\end{tabular}
\label{tab:main_results_2}
\end{table}

\subsection{Ablation Results}
For the ablation study, \texttt{$^\dagger$ SAFD + Eq.~\eqref{eq:FNO_layer}} merely replaces the Fourier transform of FNO in Eq.~\eqref{eq:FNO_layer} with the SAFD driven by the neural pole predictor, while retaining the static learnable weights $R_\phi$ of the standard FNO and the plain FNO layer architecture. \texttt{$^\dagger$ SAFD + SIREN + Eq.~\eqref{eq:FNO_layer}} further replaces $R_\phi$ with the SIREN-based dynamic kernel $R_k(\mathbf{a}; \theta)$ of Eq.~\eqref{eq:afd_operator}, still using the FNO operator layer in Eq.~\eqref{eq:FNO_layer} and the SAFD components. As shown in Tables~\ref{tab:main_results_1} and~\ref{tab:main_results_2}, each added component brings a progressive improvement. The gain from \texttt{$^\dagger$ SAFD + Eq.~\eqref{eq:FNO_layer}} over FNO isolates the effect of the adaptive TM system itself, indicating that replacing the fixed Fourier basis with an input-adaptive TM system enhances the spectral representation and improves operator learning. The gain from \texttt{$^\dagger$ SAFD + SIREN + Eq.~\eqref{eq:FNO_layer}} shows that conditioning the spectral filter on the predicted poles, making the filter itself input-adaptive, provides further improvements.

\subsection{Zero-Shot Super-Resolution}
\begin{table}[htbp]
\centering
\caption{Zero-Shot Super-Resolution on 2D Darcy Flow. Operators are trained at $32 \times 32$ resolution and directly evaluated at all resolutions. FNO uses $12\times12$ retained Fourier modes, while SAFDNO has $12$ poles along each spatial dimension, hence $24$ poles in total.}
\setlength{\tabcolsep}{10pt}
\begin{tabular}{l|c c c c}
\toprule
\multicolumn{1}{c|}{} & Params & $32 \times 32$ & $64 \times 64$ & $128 \times 128$\\
\midrule
\textsc{FNO} & $701,281$ & $0.0730$ & $0.0832$ & $0.1031$\\
\textbf{SAFDNO (Ours)}  & \textbf{447,769} & \textbf{0.0413} & \textbf{0.0528} & \textbf{0.0643}\\
\bottomrule
\end{tabular}
\label{tab:zero_shot}
\end{table}
Table~\ref{tab:zero_shot} evaluates zero-shot super-resolution performance on 2D Darcy flow. SAFDNO outperforms FNO across all tested resolutions. In particular, when evaluated on finer grids, SAFDNO exhibits less performance degradation than FNO 
while using $36.1\%$ fewer parameters. This indicates that the TM representation, which encodes spectral information through pole locations, is inherently less sensitive to changes in 
discretization and provides a more compact spectral representation. The visualization of zero-shot super-resolution performance is provided in Figure~\ref{fig:afdno_zeroshot_darcy} in Appendix~\ref{app:performance_visual}.


\section{Conclusion}
We presented SAFDNO, an adaptive spectral neural operator that replaces fixed truncated Fourier modes with the input-dependent TM system and amortized pole prediction. By combining SAFD-inspired adaptive representations with neural operator learning, SAFDNO improves the modeling of localized, nonstationary, and fine-scale PDE solution structures while retaining the global receptive field benefits of spectral operators. Empirically, SAFDNO achieves strong accuracy across diverse regular-grid PDE benchmarks and more robust zero-shot super-resolution performance on Darcy flow. Overall, our results highlight adaptive rational bases as a promising direction for operator learning.

\bibliographystyle{unsrt}
\bibliography{references.bib}

\newpage
\appendix

\section*{Appendix}

\section{Extended Preliminaries}\label{app:extend_preliminaries}

\subsection{Operator Learning}
Operator learning aims to approximate maps between infinite-dimensional function spaces rather than between finite-dimensional vector spaces. Let $\Omega \subset \mathbb{R}^d$ be a bounded domain, $\mathcal{A}=\mathcal{A}(\Omega, \mathbb{R}^{d_a})$ be the input function space, and $\mathcal{U} = \mathcal{U}(\Omega, \mathbb{R}^{d_u})$ be the solution function space. Given training data pairs $\{(a_i, u_i) \}_{i=1}^N \subset \mathcal{A} \times \mathcal{U}$, we aim to train the neural operator
\begin{equation*}
    \mathcal{G}_\theta : \mathcal{A} \rightarrow \mathcal{U}
\end{equation*}
by minimizing the empirical risk
\begin{equation*}
    \min_{\theta \in \mathbb{R}^p} \frac1N \sum_{i=1}^N \ell \Big(\mathcal{G}_\theta (a_i), u_i \Big).
\end{equation*}

In general, the architecture of neural operators can be presented as
\begin{equation*}
    \mathcal{G}_\theta := \mathcal{P} \circ \mathcal{L}^L \circ \cdots \circ \mathcal{L}^1 \circ \mathcal{Q},
\end{equation*}
where $\mathcal{Q}: \mathcal{A} (\Omega, \mathbb{R}^{d_a}) \rightarrow \mathcal{V}(\Omega, \mathbb{R}^{d_v})$ is a lifting layer, $\mathcal{L}^l: \mathcal{V}(\Omega, \mathbb{R}^{d_v}) \rightarrow \mathcal{V}(\Omega, \mathbb{R}^{d_v})$ are operator layers acting on the hidden representations, and $\mathcal{P}: \mathcal{V}(\Omega, \mathbb{R}^{d_v}) \rightarrow \mathcal{U}(\Omega, \mathbb{R}^{d_u})$ is a projection layer.

A general neural operator layer has the form
\begin{equation*}
    v^{(l+1)} (x) = \sigma \Big( W^{(l)} v^{(l)} (x) + \int_\Omega \kappa_\phi^{(l)} \big(x, y, a(x), a(y)\big) v^{(l)} (y) dy + b^{(l)} \Big), \quad \forall x \in \Omega,
\end{equation*}
where $W^{(l)}: \mathbb{R}^{d_v} \rightarrow \mathbb{R}^{d_v}$ is a point-wise linear map, $\kappa_\phi^{(l)}$ is a learnable kernel, $b^{(l)}$ is a bias term, and $\sigma$ is the activation function.

\subsection{Fourier Neural Operator}
Fourier Neural Operator (FNO) \citep{fno} uses global convolution in the Fourier domain, and its kernel integral operator satisfies translation invariance such that
\begin{equation*}
    \kappa_\phi (x, y, a(x), a(y)) =\kappa_\phi (x - y).
\end{equation*}
Under this assumption, the kernel operator becomes a convolution:
\begin{equation*}
    (K_{\phi}^{(\ell)}v)(x)
    =
    \int_{\Omega}\kappa_{\phi}^{(\ell)}(x-y)v(y)\,dy
    =
    \mathcal{F}^{-1}\Big(
        \mathcal{F}(\kappa_{\phi}^{(\ell)})(k)\cdot \mathcal{F}(v)(k)
    \Big)(x),
    \quad \forall x\in \Omega.
\end{equation*}
Accordingly, each Fourier layer is written as
\begin{equation*}
    v^{(\ell+1)}(x)
    =
    \sigma\Big(
        W^{(\ell)}v^{(\ell)}(x)
        +
        \mathcal{F}^{-1}\big(
            \mathcal{R}_{\phi}^{\ell}(k)\cdot \mathcal{F}(v^{(\ell)})(k)
        \big)(x)
        +
        b^{(\ell)}
    \Big),
    \quad \forall x\in \Omega,
\end{equation*}
where $k$ is the frequency index and $\mathcal{R}_{\phi}^{\ell}(k) \in \mathbb{C}^{d_v \times d_v}$ is the learnable weight matrix for each frequency.

In practice, FNO adopts a finite-dimensional parameterization by truncating the Fourier series at a maximal number of Fourier modes  $k_{\max} = \big| \{ k \in \mathbb{Z}^d: |k_j| \leq k_{\max, j}, j=1,\dots,d \} \big|$, and higher-frequency modes are discarded by padding zeros.

\subsection{Adaptive Fourier Decomposition}

Let $\mathbb{D}=\{z\in \mathbb{C}: |z|<1\}$ be the open unit disk in the complex plane, and let \(\mathbb{T}=\partial \mathbb{D}\) denote its boundary. The Hardy space \(H^2(\mathbb{D})\) consists of all holomorphic functions on \(\mathbb{D}\) of the form
\[
f(z)=\sum_{m\ge 0} c_m z^m,
\qquad
\sum_{m\ge 0}|c_m|^2<\infty,
\]
equipped with the inner product
\[
\langle f,g\rangle_{H^2(\mathbb{D})}
=
\frac{1}{2\pi}\int_{0}^{2\pi} f(e^{it})\overline{g(e^{it})}\,dt.
\]
For \(\widetilde{f}\in L^2(\mathbb{T})\) with Fourier expansion
\[
\widetilde{f}(e^{it})=\sum_{m\in\mathbb{Z}} c_m e^{imt},
\]
its Hardy projection is defined by
\[
f^{+}(z):=(P^{+}\widetilde{f})(z)=\sum_{m\ge 0} c_m z^m \in H^2(\mathbb{D}),
\qquad z\in \mathbb{D}.
\]
If \(\widetilde{f}\) is real-valued, its analytic signal is defined by
\[
A\widetilde{f}(e^{it})=\widetilde{f}(e^{it})+iH\widetilde{f}(e^{it}),
\]
where \(H:L^2(\mathbb{T})\to L^2(\mathbb{T})\) is the periodic Hilbert transform. In this case,
\[
\widetilde{f}(e^{it})=\Re\big(A\widetilde{f}(e^{it})\big).
\]

For \(a\in \mathbb{D}\), called a pole, the normalized Szeg\H{o} kernel is
\[
e_a(z)=\frac{\sqrt{1-|a|^2}}{1-\overline{a}z},
\qquad
\langle f,e_a\rangle_{H^2(\mathbb{D})}
=
\sqrt{1-|a|^2}\,f(a).
\]
Given a pole sequence \(a=(a_1,\ldots,a_K)\in \mathbb{D}^K\), the associated Takenaka-Malmquist (TM) system is
\[
B_1(z;a)=e_{a_1}(z),
\qquad
B_k(z;a)
=
e_{a_k}(z)\prod_{m=1}^{k-1}\frac{z-a_m}{1-\overline{a_m}z},
\qquad k\ge 2.
\]

For the analytic component \(f=f^{+}\in H^2(\mathbb{D})\), set \(f_1=f\). The AFD recursion is
\[
f_k(z)
=
\langle f_k,e_{a_k}\rangle_{H^2(\mathbb{D})}\,e_{a_k}(z)
+
f_{k+1}(z)\frac{z-a_k}{1-\overline{a_k}z},
\]
where the reduced remainder is
\[
f_{k+1}(z)
=
\left(
f_k(z)-\langle f_k,e_{a_k}\rangle_{H^2(\mathbb{D})}\,e_{a_k}(z)
\right)\frac{1-\overline{a_k}z}{z-a_k}.
\]
Since
\[
f_k(a_k)-\langle f_k,e_{a_k}\rangle_{H^2(\mathbb{D})}\,e_{a_k}(a_k)=0,
\]
the reduced remainder \(f_{k+1}\) again belongs to \(H^2(\mathbb{D})\). The pole \(a_k\) is selected by the maximal selection principle
\[
a_k
\in
\operatorname*{arg\,max}_{a\in \mathbb{D}}
\big|\langle f_k,e_a\rangle_{H^2(\mathbb{D})}\big|^2
=
\operatorname*{arg\,max}_{a\in \mathbb{D}}
(1-|a|^2)|f_k(a)|^2.
\]
After \(n\) steps, one has
\[
f(z)
=
\sum_{k=1}^{n}
\langle f_k,e_{a_k}\rangle_{H^2(\mathbb{D})}\,B_k(z;a)
+
f_{n+1}(z)\prod_{k=1}^{n}\frac{z-a_k}{1-\overline{a_k}z}.
\]
By orthogonality,
\[
\|f\|_{H^2(\mathbb{D})}^2
=
\sum_{k=1}^{n}
\big|\langle f_k,e_{a_k}\rangle_{H^2(\mathbb{D})}\big|^2
+
\|f_{n+1}\|_{H^2(\mathbb{D})}^2.
\]
Under consecutive maximal selections,
\[
f(z)
=
\sum_{k=1}^{\infty}
\langle f_k,e_{a_k}\rangle_{H^2(\mathbb{D})}\,B_k(z;a)
\qquad \text{in } H^2(\mathbb{D}).
\]

The orthogonal standard remainder is defined by
\[
g_k(z)
=
f(z)-\sum_{\ell=1}^{k-1}\langle f,B_\ell\rangle_{H^2(\mathbb{D})}\,B_\ell(z),
\qquad k\ge 2.
\]
Then
\[
\langle f_k,e_{a_k}\rangle_{H^2(\mathbb{D})}
=
\langle g_k,B_k\rangle_{H^2(\mathbb{D})}
=
\langle f,B_k\rangle_{H^2(\mathbb{D})},
\]
and
\[
g_k(z)
=
f_k(z)\prod_{\ell=1}^{k-1}\frac{z-a_\ell}{1-\overline{a_\ell}z}.
\]

\subsection{Stochastic Adaptive Fourier Decomposition}

Let \(\widetilde{f}(e^{it},\omega)\in L^2(\mathbb{T})\) be a real-valued family of signals, and let
\[
f(z,\omega):=(P^{+}\widetilde{f}(\cdot,\omega))(z)\in H^2(\mathbb{D}).
\]
Assume
\[
\mathbb{E}_{\omega}\|f(\cdot,\omega)\|_{H^2(\mathbb{D})}^2<\infty.
\]
Set
\[
f_1(z,\omega)=f(z,\omega).
\]
For each fixed \(\omega\),
\[
f_k(z,\omega)
=
\langle f_k(\cdot,\omega),e_{a_k}\rangle_{H^2(\mathbb{D})}\,e_{a_k}(z)
+
f_{k+1}(z,\omega)\frac{z-a_k}{1-\overline{a_k}z},
\]
where
\[
f_{k+1}(z,\omega)
=
\left(
f_k(z,\omega)
-
\langle f_k(\cdot,\omega),e_{a_k}\rangle_{H^2(\mathbb{D})}\,e_{a_k}(z)
\right)\frac{1-\overline{a_k}z}{z-a_k}.
\]
The pole \(a_k\) is selected by the stochastic maximal selection principle
\[
a_k
\in
\operatorname*{arg\,max}_{a\in \mathbb{D}}
\mathbb{E}_{\omega}
\left[
\big|
\langle f_k(\cdot,\omega),e_a\rangle_{H^2(\mathbb{D})}
\big|^2
\right].
\]
Accordingly, for each fixed \(\omega\),
\[
f(z,\omega)
=
\sum_{k=1}^{n}
\langle f_k(\cdot,\omega),e_{a_k}\rangle_{H^2(\mathbb{D})}\,B_k(z;a)
+
f_{n+1}(z,\omega)\prod_{k=1}^{n}\frac{z-a_k}{1-\overline{a_k}z}.
\]

Let
\[
g_k(z,\omega)
=
f(z,\omega)
-
\sum_{\ell=1}^{k-1}
\langle f(\cdot,\omega),B_\ell\rangle_{H^2(\mathbb{D})}\,B_\ell(z),
\qquad k\ge 2,
\]
be the \(k\)-th standard remainder. Then
\[
\langle f_k(\cdot,\omega),e_{a_k}\rangle_{H^2(\mathbb{D})}
=
\langle g_k(\cdot,\omega),B_k\rangle_{H^2(\mathbb{D})}
=
\langle f(\cdot,\omega),B_k\rangle_{H^2(\mathbb{D})}.
\]
Hence,
\[
\mathbb{E}_{\omega}
\left[
\big|
\langle f_k(\cdot,\omega),e_{a_k}\rangle_{H^2(\mathbb{D})}
\big|^2
\right]
=
\mathbb{E}_{\omega}
\left[
\big|
\langle g_k(\cdot,\omega),B_k\rangle_{H^2(\mathbb{D})}
\big|^2
\right]
=
\mathbb{E}_{\omega}
\left[
\big|
\langle f(\cdot,\omega),B_k\rangle_{H^2(\mathbb{D})}
\big|^2
\right].
\]
If
\[
B_k^a(z)
=
e_a(z)\prod_{\ell=1}^{k-1}\frac{z-a_\ell}{1-\overline{a_\ell}z},
\]
then the stochastic maximal selection principle can be equivalently written as
\[
a_k
\in
\operatorname*{arg\,max}_{a\in \mathbb{D}}
\mathbb{E}_{\omega}
\left[
\big|
\langle f(\cdot,\omega),B_k^a\rangle_{H^2(\mathbb{D})}
\big|^2
\right].
\]

\section{Stochastic Adaptive Fourier Decomposition Neural Operator}
\label{app:safdno_detail}

This appendix provides further details for the SAFDNO formulation in Section~\ref{sec:safdno}.

\subsection{Neural Pole Predictor}
\label{app:pole_predictor}

The classic stochastic AFD selects poles sequentially with a greedy procedure according to the maximal selection principle, choosing at each step the pole that captures the largest amount of expected energy. This recursive pole selection is computationally expensive for operator learning. SAFDNO replaces this greedy search with a neural pole predictor that amortizes pole selection across samples. For a problem posed on a domain $\Omega\subset\mathbb{R}^d$, given the hidden representation $v$, SAFDNO predicts a collection of pole sequences
\begin{equation}
    a
    =
    \bigl(a^{(1)},a^{(2)},\dots,a^{(d)}\bigr)
    =
    \Psi_\theta(v),
    \label{eq:poles_app}
\end{equation}
where $\Psi_\theta$ is the amortized neural pole predictor. For each spatial dimension $r=1,\dots,d$, the corresponding pole sequence is
\begin{equation}
    a^{(r)}
    =
    \bigl(a^{(r)}_1,\dots,a^{(r)}_K\bigr)
    \in \mathbb{D}^K .
\end{equation}
Since $\Psi_\theta$ is evaluated on the current hidden representation $v$, the predicted pole collection is input-dependent.

\paragraph{Token construction.}
Specifically, $\Psi_\theta$ is implemented as a query-based cross-attention module to directly infer poles from the input data. We first map the hidden representation $v$ into a set of latent tokens $X=[x_1,\dots,x_T]$, where each token $x_t$ is formed from the local feature $v_t$ together with its positional embedding. Let $\{\xi_t\}_{t=1}^T\subset[0,1]^d$ denote the normalized coordinates of the tokens, and let $\gamma(\xi_t)$ be the corresponding Fourier positional embedding. Then each token embedding is obtained by linear projection:
\begin{equation}
    x_t
    =
    W\bigl[v_t \,\|\, \gamma(\xi_t)\bigr]+b,
    \label{eq:token_app}
\end{equation}
where $\|$ denotes concatenation.

\paragraph{Pole slots and cross-attention updates.}
Let
\begin{equation}
    S^{(0)}
    =
    \bigl[S^{(0)}_1;S^{(0)}_2;\dots;S^{(0)}_d\bigr]
    \in \mathbb{R}^{dK\times d_h},
    \qquad
    S^{(0)}_r\in\mathbb{R}^{K\times d_h},
\end{equation}
denote the learnable pole slots, where $S^{(0)}_r$ is the bank of $K$ query slots associated with the $r$-th spatial dimension. For each cross-attention block $\ell=1,\dots,L$, the pole slots are updated by
\begin{align}
    \mathrm{head}^{(\ell)}_h
    &=
    \mathrm{Attention}
    \left(
        S^{(\ell-1)}W^{Q,(\ell)}_h,\;
        XW^{K,(\ell)}_h,\;
        XW^{V,(\ell)}_h
    \right),
    \\
    U^{(\ell)}
    &=
    \bigl[
        \mathrm{head}^{(\ell)}_1
        \,\|\,
        \cdots
        \,\|\,
        \mathrm{head}^{(\ell)}_H
    \bigr]W^{O,(\ell)},
    \\
    \widehat{S}^{(\ell)}
    &=
    S^{(\ell-1)}+U^{(\ell)},
    \\
    S^{(\ell)}
    &=
    \widehat{S}^{(\ell)}
    +
    \mathrm{MLP}\!\left(\widehat{S}^{(\ell)}\right).
\end{align}
After $L$ steps, the final slot states $S^{(L)}$ are mapped to raw pole parameters through a slot-wise MLP $g_\theta$:
\begin{equation}
    P
    =
    g_\theta\!\left(S^{(L)}\right)
    \in \mathbb{R}^{dK\times 2},
    \qquad
    P
    =
    \bigl[P^{(1)};P^{(2)};\dots;P^{(d)}\bigr],
\end{equation}
where
\begin{equation}
    P^{(r)}
    =
    \bigl[
        (\widehat{\rho}^{(r)}_k,\widehat{\phi}^{(r)}_k)
    \bigr]_{k=1}^K
    \in \mathbb{R}^{K\times 2}.
\end{equation}
Finally, each pole is constructed from the output $(\widehat{\rho}^{(r)}_k,\widehat{\phi}^{(r)}_k)$ by the sigmoid and tanh activations and scaling:
\begin{equation}
    \rho^{(r)}_k
    =
    (1-\varepsilon)
    \operatorname{sigmoid}\!\left(\widehat{\rho}^{(r)}_k\right),
    \qquad
    \phi^{(r)}_k
    =
    \pi\tanh\!\left(\widehat{\phi}^{(r)}_k\right),
    \qquad
    a^{(r)}_k
    =
    \rho^{(r)}_k e^{\mathrm{i}\phi^{(r)}_k}.
    \label{eq:pole_param_app}
\end{equation}
Here $\varepsilon>0$ is a small constant ensuring $a^{(r)}_k\in\mathbb{D}$ for every $k=1,\dots,K$ and $r=1,\dots,d$.

The pole collection $a=\Psi_\theta(v)$ specializes to
\begin{align}
    d=1:\quad
    &
    a
    =
    a^{(1)}
    =
    \bigl(a^{(1)}_1,\dots,a^{(1)}_K\bigr)
    \in \mathbb{D}^K,
    \\
    d=2:\quad
    &
    a
    =
    \bigl(a^{(1)},a^{(2)}\bigr),
    \qquad
    a^{(r)}\in\mathbb{D}^K,\quad r=1,2,
    \\
    d=3:\quad
    &
    a
    =
    \bigl(a^{(1)},a^{(2)},a^{(3)}\bigr),
    \qquad
    a^{(r)}\in\mathbb{D}^K,\quad r=1,2,3.
\end{align}
When the poles are placed at the origin, the one-dimensional TM system reduces to the nonnegative frequency Fourier basis of $H^2(\mathbb{T})$. As the magnitude of a pole approaches the unit circle, the corresponding Szeg\H{o} factor becomes increasingly concentrated near its phase location. This is consistent with Remark~\ref{rem:tm_fourier_1d} and its justification in Appendix~\ref{app:remark1_full}.

\paragraph{Pole distributions across layers.}
Figure~\ref{fig:pole_darcy_app} shows the learned poles of SAFDNO on the Darcy flow problem. In this example, the first two layers place poles over a relatively broad region of the unit disk. Layer 3 forms a compact group along the positive real direction and away from the origin, whereas layer 4 forms the most compact group near the origin. Thus, the learned poles do not simply move radially inward with depth; rather, the pole predictor changes both the spread and the location of the adaptive rational parameters across layers.

This behavior can be interpreted through the role of poles in the TM system. For a pole $a=\rho e^{\mathrm{i}\phi}$, the angle $\phi$ determines where the boundary magnitude of the associated Szeg\H{o} factor is maximized, while the radius $\rho$ controls the strength of this concentration. Therefore, the pole sequence in layer 3 indicates that this layer repeatedly uses a narrow family of TM atoms with a similar phase and localization scale. In contrast, the near-origin cluster in layer 4 corresponds to a weaker rational deformation of the TM system, where the angular location of the poles has a smaller effect. This suggests that, after the earlier SAFDNO layers have transformed the latent representation, the latter layer can represent the remaining dominant components with a less diverse pole sequence.

\begin{figure}[htbp]
\centering
\includegraphics[width=0.5\linewidth]{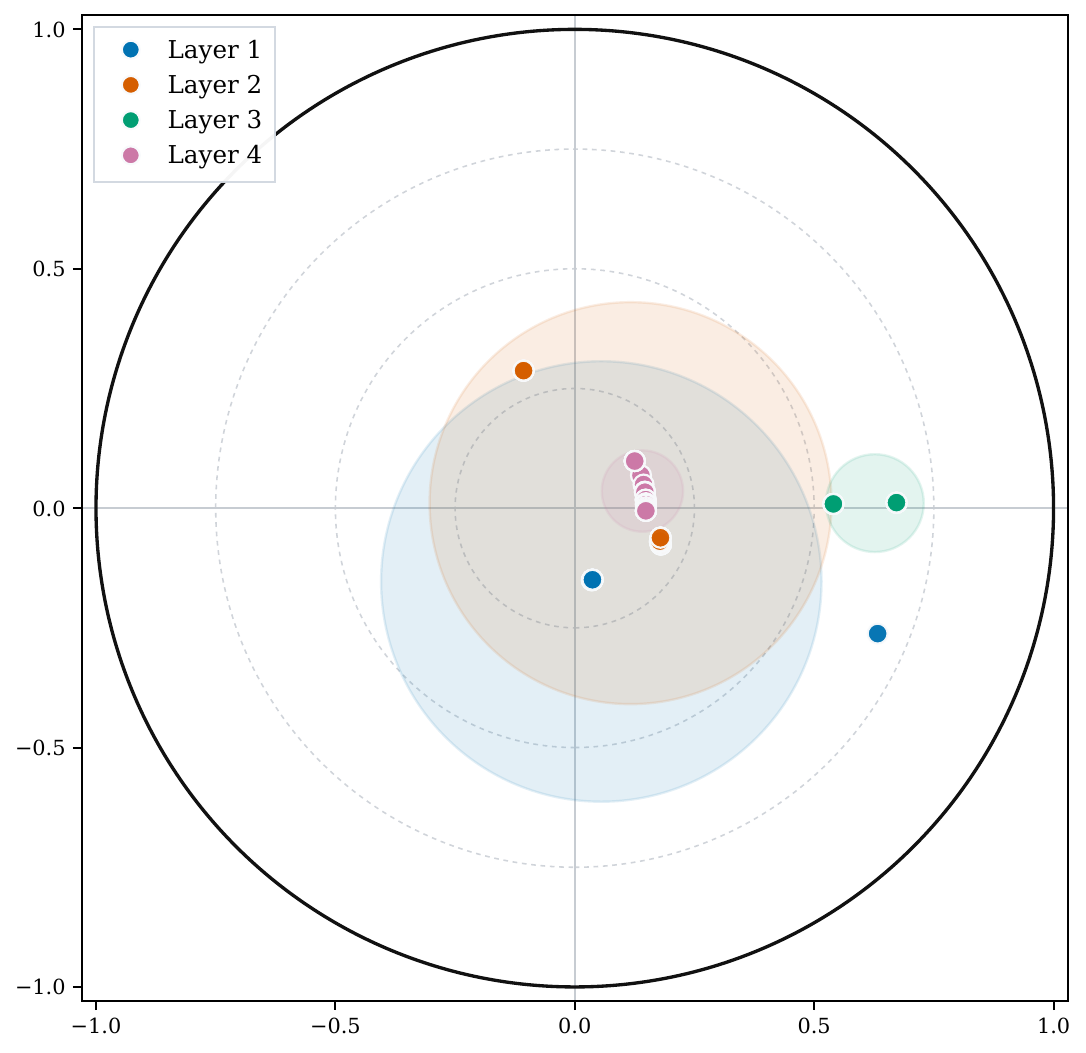}
\caption{Learned poles of SAFDNO on the Darcy flow problem across four SAFDNO layers. Each marker denotes a predicted pole in the unit disk $\mathbb{D}$.}
\label{fig:pole_darcy_app}
\end{figure}

\subsection{Takenaka-Malmquist System}
\label{app:tm_system}

Recall from Eq.~\eqref{eq:1d_tm_system} that, given a pole sequence $a=(a_1,\dots,a_K)\in\mathbb{D}^K$, the associated one-dimensional Takenaka-Malmquist (TM) system is
\begin{equation}
    B_1(z;a)
    =
    e_{a_1}(z),
    \qquad
    B_k(z;a)
    =
    e_{a_k}(z)
    \prod_{m=1}^{k-1}
    \frac{z-a_m}{1-\overline{a_m}z},
    \qquad
    k\ge 2,
    \label{eq:tm_1d_app}
\end{equation}
where
\begin{equation}
    e_a(z)
    =
    \frac{\sqrt{1-|a|^2}}{1-\overline{a}z}.
\end{equation}
For $d>1$ and
\[
    a
    =
    \bigl(a^{(1)},a^{(2)},\dots,a^{(d)}\bigr),
\]
the product-TM system is formed by taking the tensor product of the one-dimensional TM systems along each coordinate. For $k=(k_1,\dots,k_d)\in[K]^d$ and $z=(z_1,\dots,z_d)\in\mathbb{D}^d$,
\begin{equation}
    B_k(z;a)
    =
    \prod_{r=1}^d
    B_{k_r}\!\left(z_r;a^{(r)}\right).
    \label{eq:product_tm_app}
\end{equation}
The specialization to $d=1,2,3$ reads
\begin{align}
    d=1:\quad
    &
    B_k(z;a)
    =
    B_k\!\left(z_1;a^{(1)}\right),
    \\
    d=2:\quad
    &
    B_{(k_1,k_2)}(z;a)
    =
    B_{k_1}\!\left(z_1;a^{(1)}\right)
    B_{k_2}\!\left(z_2;a^{(2)}\right),
    \\
    d=3:\quad
    &
    B_{(k_1,k_2,k_3)}(z;a)
    =
    B_{k_1}\!\left(z_1;a^{(1)}\right)
    B_{k_2}\!\left(z_2;a^{(2)}\right)
    B_{k_3}\!\left(z_3;a^{(3)}\right).
\end{align}
By Proposition~\ref{prop:product_tm_orth} and Appendix~\ref{app:product_tm_orth_full_proof}, the family
\[
    \{B_k(\cdot;a)\}_{k\in[K]^d}
\]
is orthonormal in $H^2(\mathbb{T}^d)$. The separable product structure allows projection onto and reconstruction from the adaptive TM system to be implemented through one-dimensional operations along each coordinate axis.

\subsection{Analytic Branches and Adaptive Spectral Coefficients}
\label{app:branches}

\paragraph{Analytic branches.}
The Hardy space $H^2(\mathbb{T}^d)$ contains functions with nonnegative-frequency Fourier support, whereas real-valued PDE fields generally have Fourier coefficients on the full lattice $\mathbb{Z}^d$. Following the main text, SAFDNO retains the $+$ branch in the first coordinate and recovers the remaining frequency support through analytic branches indexed by
\begin{equation}
    \Sigma_1
    =
    \{+\},
    \qquad
    \Sigma_2
    =
    \{++, +-\},
    \qquad
    \Sigma_3
    =
    \{+++, ++-, +-+, +--\}.
    \label{eq:branch_set_app}
\end{equation}
Equivalently,
\begin{equation}
    \Sigma_d
    =
    \{+\}\times\{+,-\}^{d-1}.
\end{equation}
For real-valued signals, the branches not included in $\Sigma_d$ are determined by conjugate symmetry. Therefore, averaging over $\Sigma_d$ and taking the real part returns a real-valued signal in the physics domain.

\paragraph{Branch operators.}
For each $r=1,\dots,d$, let $\mathcal{H}_r$ denote the periodic Hilbert transform in the $r$-th variable, and let $\mathcal{R}_r$ denote the periodic coordinate reversal in the $r$-th coordinate. If $z_r=e^{\mathrm{i}t_r}$, then
\begin{equation}
    (\mathcal{R}_r g)(t_1,\dots,t_r,\dots,t_d)
    =
    g(t_1,\dots,-t_r,\dots,t_d).
\end{equation}
Define the coordinate-wise analytic branch operators by
\begin{equation}
    \mathcal{A}^{(r)}_+
    =
    I+\mathrm{i}\mathcal{H}_r,
    \qquad
    \mathcal{A}^{(r)}_-
    =
    \mathcal{R}_r\bigl(I-\mathrm{i}\mathcal{H}_r\bigr).
    \label{eq:branch_ops_app}
\end{equation}
The operator $\mathcal{A}^{(r)}_-$ forms the anti-analytic branch in the $r$-th coordinate and then applies the coordinate reversal $t_r\mapsto -t_r$, so that the resulting branch is represented in the nonnegative-frequency Hardy space. For $\sigma=(\sigma_1,\dots,\sigma_d)\in\Sigma_d$, the composite branch operator is
\begin{equation}
    \mathcal{A}_\sigma
    =
    \mathcal{A}^{(1)}_{\sigma_1}
    \mathcal{A}^{(2)}_{\sigma_2}
    \cdots
    \mathcal{A}^{(d)}_{\sigma_d},
    \qquad
    v^\sigma
    =
    \mathcal{A}_\sigma v.
\end{equation}
As established in Appendix~\ref{app:afd_kernel_bounded_full}, $v^\sigma\in H^2(\mathbb{T}^d;\mathbb{C}^{d_v})$ for every $\sigma\in\Sigma_d$, so each retained branch admits an expansion in the product-TM system.

\paragraph{Adaptive spectral coefficients.}
For $\sigma\in\Sigma_d$ and $k\in[K]^d$, the adaptive spectral coefficient is
\begin{equation}
    C^\sigma_k
    =
    \bigl\langle
        v^\sigma,
        B_k(\cdot;a)
    \bigr\rangle_{H^2(\mathbb{T}^d)} ,
    \label{eq:adaptive_coeff_app}
\end{equation}
where the inner product is taken channel-wise. Thus $C^\sigma_k\in\mathbb{C}^{d_v}$ is the complex expansion coefficient associated with the $k$-th adaptive TM mode on analytic branch $\sigma$. In contrast to FNO, SAFDNO does not rely on a fixed Fourier basis with hard frequency truncation. The adaptive spectral coefficients are computed with respect to the data-dependent TM system induced by the predicted pole collection $a=\Psi_\theta(v)$. The size of the adaptive spectral representation is controlled by the number of predicted poles along each spatial dimension.

\subsection{SAFD Kernel Integral Operator}
\label{app:safd_operator}

\paragraph{Learned filtering in the adaptive spectral domain.}
For the coefficients in the adaptive spectral domain, SAFDNO defines a neural network for filtering. Specifically, for each $k\in[K]^d$, we introduce a learnable linear transform
\begin{equation}
    R_k(a;\theta)
    :=
    \psi_\theta(k;a)
    \in
    \mathbb{C}^{d_v\times d_v},
    \label{eq:mode_filter_app}
\end{equation}
where $\psi_\theta$ is a neural network conditioned on the pole collection $a$. In this work, $\psi_\theta$ is implemented as a sinusoidal representation network (SIREN). The matrix $R_k(a;\theta)$ acts on the channel dimension of the adaptive spectral coefficient $C^\sigma_k$, while different adaptive TM modes are filtered independently in the coefficient domain.

\paragraph{Branch-wise kernel operator.}
For a fixed pole collection $a$, the branch-wise SAFD map can be written as a finite-rank kernel integral operator. Let $d\mu$ denote the normalized measure on $\mathbb{T}^d$. Define
\begin{equation}
    \kappa_{a,\theta}(z,\zeta)
    =
    \sum_{k\in[K]^d}
    B_k(z;a)\,
    R_k(a;\theta)\,
    \overline{B_k(\zeta;a)},
    \qquad
    z,\zeta\in\mathbb{T}^d .
    \label{eq:branch_kernel_app}
\end{equation}
Then the branch-wise reconstruction is
\begin{equation}
    u^\sigma(z)
    :=
    \sum_{k\in[K]^d}
    R_k(a;\theta)\,
    C^\sigma_k\,
    B_k(z;a)
    =
    \int_{\mathbb{T}^d}
    \kappa_{a,\theta}(z,\zeta)\,
    v^\sigma(\zeta)\,
    d\mu(\zeta).
    \label{eq:branch_recon_app}
\end{equation}
Thus, for each retained analytic branch, SAFDNO applies learned filtering to adaptive TM coefficients and reconstructs the result in the corresponding input-dependent TM system.

\paragraph{Full SAFD kernel operator.}
For branches with $\sigma_r=-$, the corresponding analytic branch is reconstructed in a coordinate-reversed Hardy representation. To map the reconstructed branch back to the original coordinate orientation, define the combined coordinate-reversal operator
\begin{equation}
    \mathcal{R}_\sigma
    :=
    \prod_{r:\,\sigma_r=-}
    \mathcal{R}_r,
\end{equation}
with the convention that an empty product is the identity. The fully expanded SAFD kernel integral operator is
\begin{equation}
    \bigl(K(a;\theta)v\bigr)(z)
    =
    \Re\!\left[
        \frac{1}{|\Sigma_d|}
        \sum_{\sigma\in\Sigma_d}
        \bigl(\mathcal{R}_\sigma u^\sigma\bigr)(z)
    \right].
    \label{eq:safd_op_app}
\end{equation}
Equation~\eqref{eq:safd_op_app} expands the compact branch notation in Eq.~\eqref{eq:afd_operator} and makes explicit the coordinate-reversal correction for branches with $\sigma_r=-$. Since each $\mathcal{R}_r$ is a unitary involution on $L^2(\mathbb{T}^d)$, the boundedness result of Proposition~\ref{prop:afd_kernel_bounded} remains unchanged:
\begin{equation}
    \|K(a;\theta)v\|_{L^2}
    \le
    2^d M(a,\theta)\|v\|_{L^2},
    \qquad
    M(a,\theta)
    :=
    \max_{k\in[K]^d}
    \|R_k(a;\theta)\|_{\mathrm{op}} .
\end{equation}

\paragraph{Specialization to $d=1,2,3$.}
For $d=1$, the only retained branch is $\sigma=+$ and no coordinate reversal is required:
\begin{equation}
    \bigl(K(a;\theta)v\bigr)(z)
    =
    \Re\!\left[
        \sum_{k=1}^K
        R_k(a;\theta)
        C^+_k
        B_k(z;a)
    \right].
\end{equation}
For $d=2$,
\begin{equation}
    \bigl(K(a;\theta)v\bigr)(z)
    =
    \Re\!\left[
        \frac{1}{2}
        \left(
            u^{++}(z)
            +
            \bigl(\mathcal{R}_2 u^{+-}\bigr)(z)
        \right)
    \right].
\end{equation}
For $d=3$,
\begin{equation}
    \bigl(K(a;\theta)v\bigr)(z)
    =
    \Re\!\left[
        \frac{1}{4}
        \left(
            u^{+++}(z)
            +
            \bigl(\mathcal{R}_3 u^{++-}\bigr)(z)
            +
            \bigl(\mathcal{R}_2 u^{+-+}\bigr)(z)
            +
            \bigl(\mathcal{R}_2\mathcal{R}_3 u^{+--}\bigr)(z)
        \right)
    \right].
\end{equation}
In each case, the operator decomposes the input into retained analytic branches, projects each branch onto the product-TM system, applies learned filtering through $R_k(a;\theta)$, reconstructs each branch, maps coordinate-reversed branches back to the original coordinate orientation, averages over the retained branches, and takes the real part.

\subsection{SAFDNO Operator Layer}
\label{app:layer}

\paragraph{Basic layer.}
Let $v^{(l)}$ denote the latent representation entering the $l$-th SAFDNO layer, and let
\begin{equation}
    a^{(l)}
    =
    \Psi^{(l)}_\theta\!\left(v^{(l)}\right)
\end{equation}
be the pole collection predicted for that layer. Writing
\begin{equation}
    s^{(l)}(z)
    :=
    \bigl(K(a^{(l)};\theta_r)v^{(l)}\bigr)(z),
\end{equation}
the basic SAFDNO layer is
\begin{equation}
    v^{(l+1)}(z)
    =
    v^{(l)}(z)
    +
    W_2^{(l)}
    \phi\!\left(
        W_1^{(l)}s^{(l)}(z)
        +
        b_1^{(l)}
    \right)
    +
    b_2^{(l)},
    \label{eq:layer_basic_app}
\end{equation}
where $\phi$ is a pointwise activation function.

\paragraph{Augmented layer for structured-mesh PDEs.}
For structured-mesh PDEs, SAFDNO may augment the SAFD spectral output with local differential features computed by central finite differences. Let $e_r$ denote the unit index vector along the $r$-th spatial dimension. At grid index $n$, define the discrete first-order central difference operator and the discrete Laplacian operator by
\begin{equation}
    (\delta_r f)[n]
    =
    \frac{1}{2}
    \bigl(
        f[n+e_r]-f[n-e_r]
    \bigr),
    \qquad
    (\Delta_h f)[n]
    =
    \sum_{r=1}^d
    \bigl(
        f[n+e_r]+f[n-e_r]-2f[n]
    \bigr),
    \label{eq:fd_operators_app}
\end{equation}
with periodic boundary handling. The augmented SAFDNO layer is
\begin{equation}
    v^{(l+1)}(z)
    =
    v^{(l)}(z)
    +
    W_2^{(l)}
    \phi\!\left(
        W_1^{(l)}
        \bigl[
            s^{(l)}(z),
            \{\delta_r s^{(l)}(z)\}_{r=1}^d,
            \Delta_h s^{(l)}(z)
        \bigr]
        +
        b_1^{(l)}
    \right)
    +
    b_2^{(l)},
    \label{eq:layer_aug_app}
\end{equation}
where $[\cdot]$ denotes concatenation along the channel dimension. The local differential feature tuple specializes to
\begin{align}
    d=1:\quad
    &
    \bigl(
        s^{(l)},
        \delta_1 s^{(l)},
        \Delta_h s^{(l)}
    \bigr),
    \\
    d=2:\quad
    &
    \bigl(
        s^{(l)},
        \delta_1 s^{(l)},
        \delta_2 s^{(l)},
        \Delta_h s^{(l)}
    \bigr),
    \\
    d=3:\quad
    &
    \bigl(
        s^{(l)},
        \delta_1 s^{(l)},
        \delta_2 s^{(l)},
        \delta_3 s^{(l)},
        \Delta_h s^{(l)}
    \bigr).
\end{align}
Both the basic layer~\eqref{eq:layer_basic_app} and the augmented layer~\eqref{eq:layer_aug_app} preserve the spatial resolution of the latent representation on the working grid. The SAFD spectral component is parameterized by the neural pole predictor $\Psi_\theta$ and the filtering network $\psi_\theta(k;a)$, rather than by a resolution-indexed tensor of Fourier weights. Therefore, after discretizing the TM projection and reconstruction operators on a target grid, the same learned spectral parameterization can be evaluated across different regular-grid resolutions. The finite-difference terms in Eq.~\eqref{eq:layer_aug_app} should be interpreted as local differential features for structured-mesh PDEs.

\section{Proofs and Theoretical Analysis}\label{app:proofs}

\subsection{Justification of Remark~\ref{rem:tm_fourier_1d}}\label{app:remark1_full}
\begin{proof}
If $a_j=0$ for all $j$, then Eq.~\eqref{eq:1d_tm_system} gives
\[
    B_k(z;0)
    =\frac{\sqrt{1-|0|^2}}{1-\overline{0}z}
      \prod_{j=1}^{k-1}\frac{z-0}{1-\overline{0}z}
    =z^{k-1},
    \qquad k\ge 1.
\]
Hence
\[
    B_1(z;0)=1,
    \qquad
    B_k(z;0)=z^{k-1},\qquad k\ge 2,
\]
so the TM system reduces to the nonnegative-frequency Fourier basis of $H^2(\mathbb{T})$.

Now let $a=\rho e^{i\phi}$ with $0\le \rho<1$. On the unit circle, $z=e^{it}$, one has
\[
    e_a(e^{it})
    =\frac{\sqrt{1-\rho^2}}{1-\rho e^{i(t-\phi)}},
\]
and therefore
\[
    |e_a(e^{it})|^2
    =\frac{1-\rho^2}{|1-\rho e^{i(t-\phi)}|^2}
    =\frac{1-\rho^2}{1+\rho^2-2\rho\cos(t-\phi)}.
\]
The denominator is minimized when $\cos(t-\phi)=1$, namely at $t=\phi \pmod{2\pi}$, so $|e_a(e^{it})|$ attains its maximum there. Moreover,
\[
    |e_a(e^{i\phi})|^2
    =\frac{1-\rho^2}{(1-\rho)^2}
    =\frac{1+\rho}{1-\rho}
    \longrightarrow +\infty
    \qquad (\rho\to 1^-).
\]
On the other hand, for any fixed $\delta>0$, if $|t-\phi|\ge \delta$, then
\[
    |e_a(e^{it})|^2
    \le
    \frac{1-\rho^2}{1+\rho^2-2\rho\cos\delta}
    \longrightarrow 0
    \qquad (\rho\to 1^-).
\]
Thus, as the pole approaches the unit circle, the atom becomes increasingly concentrated near the angle $\phi$.
\end{proof}

\subsection{Proof of Proposition~\ref{prop:product_tm_orth}} \label{app:product_tm_orth_full_proof}
\begin{proof}
For each $r\in\{1,\dots,d\}$, the one-dimensional TM system
\[
    \{B_{k_r}(\cdot;a^{(r)})\}_{k_r=1}^K
\]
is orthonormal in $H^2(\mathbb{T})$; see, for instance, \citep{AFD_Qian,2DAFD_Qian}. Let
\[
    k=(k_1,\dots,k_d),
    \qquad
    \ell=(\ell_1,\dots,\ell_d)\in[K]^d.
\]
Using Eq.~\eqref{eq:product_tm} and Fubini's theorem, we obtain
\begin{align*}
    \langle B_k(\cdot;a),B_\ell(\cdot;a)\rangle_{H^2(\mathbb{T}^d)}
    &=\frac{1}{(2\pi)^d}\int_{[0,2\pi]^d}
      \prod_{i=1}^d B_{k_r}(e^{it_r};a^{(r)})
      \overline{B_{\ell_r}(e^{it_r};a^{(r)})}\,
      dt_1\cdots dt_d\\
    &=\prod_{i=1}^d
      \left(
      \frac{1}{2\pi}\int_0^{2\pi}
      B_{k_r}(e^{it_r};a^{(r)})
      \overline{B_{\ell_r}(e^{it_r};a^{(r)})}\,dt_r
      \right)\\
    &=\prod_{i=1}^d
      \langle B_{k_r}(\cdot;a^{(r)}),B_{\ell_r}(\cdot;a^{(r)})\rangle_{H^2(\mathbb{T})}\\
    &=\prod_{i=1}^d \delta_{k_r\ell_r}.
\end{align*}
This proves that $\{B_k(\cdot;a)\}_{k\in[K]^d}$ is an orthonormal family in $H^2(\mathbb{T}^d)$. The stated Bessel inequality follows immediately,
\[
    \sum_{k\in[K]^d}
    \bigl|\langle v,B_k(\cdot;a)\rangle_{H^2(\mathbb{T}^d)}\bigr|^2
    \le
    \|v\|_{H^2(\mathbb{T}^d)}^2,
    \qquad v\in H^2(\mathbb{T}^d).
\]
\end{proof}

\subsection{Proof of Proposition~\ref{prop:afd_kernel_bounded}}\label{app:afd_kernel_bounded_full}
\begin{proof}
Recall that for each $r\in\{1,\dots,d\}$,
\[
    \mathcal{A}_+^{(r)}=I+i\mathcal{H}_r,
    \qquad
    \mathcal{A}_-^{(r)}=\mathcal{R}_r(I-i\mathcal{H}_r),
\]
where $\mathcal{H}_r$ is the periodic Hilbert transform in the $r$-th variable and
\[
    (\mathcal{R}_r g)(t_1,\dots,t_r,\dots,t_d)
    =g(t_1,\dots,-t_r,\dots,t_d).
\]
For $f\in L^2(\mathbb{T}^d)$, write
\[
    f(t)=\sum_{n\in\mathbb{Z}^d}\widehat f(n)e^{in\cdot t},
    \qquad
    n\cdot t:=\sum_{r=1}^d n_r t_r,
\]
so that
\[
    \|f\|_{L^2(\mathbb{T}^d)}^2
    =\sum_{n\in\mathbb{Z}^d}|\widehat f(n)|^2.
\]
Fix $\sigma=(\sigma_1,\dots,\sigma_d)\in\Sigma_d$, and define
\[
    \varepsilon_r=
    \begin{cases}
        +1,& \sigma_r=+,\\
        -1,& \sigma_r=-,
    \end{cases}
    \qquad
    \varepsilon_\sigma\odot m:=(\varepsilon_1m_1,\dots,\varepsilon_dm_d).
\]
The multiplier formula for the periodic Hilbert transform gives
\[
    \widehat{\mathcal{H}_r f}(n)
    =-i\,\operatorname{sgn}(n_r)\widehat f(n),
    \qquad n\in\mathbb{Z}^d.
\]
Hence
\[
    \widehat{\mathcal{A}_+^{(r)}f}(n)
    =\widehat{(I+i\mathcal{H}_r)f}(n)
    =(1+\operatorname{sgn}(n_r))\widehat f(n).
\]
The reflection operator satisfies
\[
    \widehat{\mathcal{R}_r g}(n)
    =\widehat g(n_1,\dots,-n_r,\dots,n_d),
\]
and therefore
\begin{align*}
    \widehat{\mathcal{A}_-^{(r)}f}(n)
    &=\widehat{\mathcal{R}_r(I-i\mathcal{H}_r)f}(n)\\
    &=\widehat{(I-i\mathcal{H}_r)f}(n_1,\dots,-n_r,\dots,n_d)\\
    &=(1-\operatorname{sgn}(-n_r))\widehat f(n_1,\dots,-n_r,\dots,n_d)\\
    &=(1+\operatorname{sgn}(n_r))\widehat f(n_1,\dots,-n_r,\dots,n_d).
\end{align*}
For each coordinate direction $r$, the operator $\mathcal A_{\sigma_r}^{(r)}$ acts only on the variable $t_r$. Thus, in the $d$-dimensional setting, the action of $\mathcal A_\sigma$ separates across coordinates, that is, each factor contributes its own one-dimensional Fourier multiplier, and the total multiplier is the product of these coordinate contributions. Therefore,
\[
    \widehat{\mathcal A_\sigma f}(m)
    =
    \left(\prod_{r=1}^d (1+\operatorname{sgn}(m_r))\right)
    \widehat f(\varepsilon_\sigma\odot m).
\]
If some coordinates are $m_r<0$, then the factor $1+\operatorname{sgn}(m_r)$ vanishes. Thus $\widehat{\mathcal{A}_\sigma f}(m)=0$ whenever $m\notin\mathbb{N}_0^d$, and therefore $\mathcal{A}_\sigma f\in H^2(\mathbb{T}^d)$. By Plancherel,
\begin{align*}
    \|\mathcal{A}_\sigma f\|_{L^2(\mathbb{T}^d)}^2
    &=
    \sum_{m\in\mathbb{N}_0^d}
    \left|
    \left(\prod_{r=1}^d (1+\operatorname{sgn}(m_r))\right)
    \widehat f(\varepsilon_\sigma\odot m)
    \right|^2\\
    &\le
    4^d
    \sum_{m\in\mathbb{N}_0^d}
    |\widehat f(\varepsilon_\sigma\odot m)|^2\\
    &\le
    4^d
    \sum_{n\in\mathbb{Z}^d}|\widehat f(n)|^2
    =4^d\|f\|_{L^2(\mathbb{T}^d)}^2.
\end{align*}
Consequently,
\[
    \|\mathcal{A}_\sigma f\|_{L^2(\mathbb{T}^d)}
    \le 2^d\|f\|_{L^2(\mathbb{T}^d)}.
\]

Now let $v\in L^2(\mathbb{T}^d;\mathbb{R}^{d_v})$ and set $v^\sigma=\mathcal{A}_\sigma v$, applying the previous estimate componentwise. Then
\[
    v^\sigma\in H^2(\mathbb{T}^d;\mathbb{C}^{d_v}),
    \qquad
    \|v^\sigma\|_{L^2}\le 2^d\|v\|_{L^2}.
\]
Hence, the coefficients
\[
    C_k^\sigma
    =\langle v^\sigma,B_k(\cdot;a)\rangle_{H^2},
    \qquad k\in[K]^d,
\]
are well-defined. Since $\{B_k(\cdot;a)\}_{k\in[K]^d}$ is orthonormal in $H^2(\mathbb{T}^d)$ by Proposition~\ref{prop:product_tm_orth}, the applied Bessel inequality component-wise gives
\begin{align*}
    \sum_{k\in[K]^d}\|C_k^\sigma\|_2^2
    &=
    \sum_{j=1}^{d_v}\sum_{k\in[K]^d}
    \left|
    \left\langle (v^\sigma)_j,B_k(\cdot;a)\right\rangle_{H^2(\mathbb{T}^d)}
    \right|^2\\
    &\le
    \sum_{j=1}^{d_v}\|(v^\sigma)_j\|_{L^2(\mathbb{T}^d)}^2
    =\|v^\sigma\|_{L^2(\mathbb{T}^d)}^2\\
    &\le 4^d\|v\|_{L^2(\mathbb{T}^d)}^2.
\end{align*}

Define
\[
    u^\sigma(z)
    :=
    \sum_{k\in[K]^d}R_k(a;\theta)\,C_k^\sigma\,B_k(z;a).
\]
Because the sum is finite, $u^\sigma\in H^2(\mathbb{T}^d;\mathbb{C}^{d_v})$. Using orthonormality,
\begin{align*}
    \|u^\sigma\|_{L^2}^2
    &=
    \sum_{j=1}^{d_v}
    \left\|
    \sum_{k\in[K]^d}(R_k(a;\theta)C_k^\sigma)_j\,B_k(\cdot;a)
    \right\|_{L^2}^2\\
    &=
    \sum_{j=1}^{d_v}\sum_{k\in[K]^d}
    |(R_k(a;\theta)C_k^\sigma)_j|^2\\
    &=
    \sum_{k\in[K]^d}\|R_k(a;\theta)C_k^\sigma\|_2^2\\
    &\le
    M(a,\theta)^2
    \sum_{k\in[K]^d}\|C_k^\sigma\|_2^2\\
    &\le
    4^d M(a,\theta)^2\|v\|_{L^2}^2.
\end{align*}
Therefore,
\[
    \|u^\sigma\|_{L^2}
    \le 2^d M(a,\theta)\|v\|_{L^2}.
\]
Finally,
\[
    \mathcal{K}(a;\theta)v
    =
    \Re\left(
    \frac{1}{|\Sigma_d|}
    \sum_{\sigma\in\Sigma_d}u^\sigma
    \right).
\]
Since $\|\Re w\|_2\le \|w\|_2$ for every $w\in\mathbb{C}^{d_v}$, the triangle inequality yields
\begin{align*}
    \|\mathcal{K}(a;\theta)v\|_{L^2}
    &\le
    \left\|
    \frac{1}{|\Sigma_d|}
    \sum_{\sigma\in\Sigma_d}u^\sigma
    \right\|_{L^2}\\
    &\le
    \frac{1}{|\Sigma_d|}
    \sum_{\sigma\in\Sigma_d}\|u^\sigma\|_{L^2}\\
    &\le
    2^d M(a,\theta)\|v\|_{L^2}.
\end{align*}
This proves boundedness. The preceding argument also shows that every term in Eq.~\eqref{eq:afd_operator} is well-defined for $v\in L^2(\mathbb{T}^d;\mathbb{R}^{d_v})$, so $\mathcal{K}(a;\theta)$ is well-defined. Moreover, all operations before the final real part are complex-linear, and $\Re(\cdot)$ is linear over $\mathbb{R}$. Hence $\mathcal{K}(a;\theta)$ is $\mathbb{R}$-linear as a map from $L^2(\mathbb{T}^d;\mathbb{R}^{d_v})$ to itself.
\end{proof}

\subsection{AFD approximation and convergence rate}

The following deterministic AFD theorem summarizes the one-dimensional approximation identity, interpolation property, convergence, and convergence-rate guarantee underlying the TM representation used in SAFDNO. The finite-section identity holds for arbitrary finite pole sequences, while the convergence and rate conclusions are stated for the classical maximal-selection AFD. The argument follows the generalized backward-shift construction of classical AFD \citep{AFD_Qian,2DAFD_Qian}; stochastic AFD provides the corresponding expectation-level pole-selection motivation \citep{qian2022sparse}.

\begin{theorem}[AFD approximation, interpolation, and convergence rate]
Let \(f\in H^2(\mathbb{D})\) and set \(f_1=f\). For \(a\in\mathbb{D}\), define
\[
    e_a(z)=\frac{\sqrt{1-|a|^2}}{1-\overline{a}z},
    \qquad
    b_a(z)=\frac{z-a}{1-\overline{a}z}.
\]
Given any finite pole sequence \(a_1,\ldots,a_n\in\mathbb{D}\), define the reduced remainders recursively by
\[
    f_{k+1}(z)
    =
    \frac{
        f_k(z)
        -
        \langle f_k,e_{a_k}\rangle_{H^2(\mathbb{D})}e_{a_k}(z)
    }
    {b_{a_k}(z)},
    \qquad k=1,\ldots,n,
\]
and define the associated Takenaka--Malmquist system by
\[
    B_k(z;\mathbf a)
    =
    e_{a_k}(z)\prod_{j=1}^{k-1}b_{a_j}(z),
    \qquad k=1,\ldots,n,
\]
where empty products are interpreted as \(1\). Then \(f_{k+1}\in H^2(\mathbb{D})\) for every \(k=1,\ldots,n\), and
\[
    f(z)
    =
    \sum_{k=1}^{n}
    \langle f_k,e_{a_k}\rangle_{H^2(\mathbb{D})}
    B_k(z;\mathbf a)
    +
    f_{n+1}(z)\prod_{k=1}^{n}b_{a_k}(z).
\]
Consequently, if a pole \(\alpha\in\mathbb{D}\) occurs in \(\{a_1,\ldots,a_n\}\) with multiplicity \(m_\alpha\), then the partial sum
\[
    S_n f(z)
    :=
    \sum_{k=1}^{n}
    \langle f_k,e_{a_k}\rangle_{H^2(\mathbb{D})}
    B_k(z;\mathbf a)
\]
satisfies the Hermite-type interpolation conditions
\[
    \partial_z^q\bigl(f-S_n f\bigr)(\alpha)=0,
    \qquad q=0,\ldots,m_\alpha-1.
\]
In particular, for every selected pole \(a_\ell\),
\[
    f(a_\ell)
    =
    \sum_{k=1}^{n}
    \langle f_k,e_{a_k}\rangle_{H^2(\mathbb{D})}
    B_k(a_\ell;\mathbf a).
\]

If the construction is continued indefinitely and, whenever the current reduced remainder is nonzero, the poles are selected by the classical maximal selection principle
\[
    a_k
    \in
    \operatorname*{arg\,max}_{a\in\mathbb{D}}
    \bigl|
        \langle f_k,e_a\rangle_{H^2(\mathbb{D})}
    \bigr|^2,
    \qquad k=1,2,\ldots,
\]
then
\[
    f
    =
    \sum_{k=1}^{\infty}
    \langle f_k,e_{a_k}\rangle_{H^2(\mathbb{D})}
    B_k(\cdot;\mathbf a)
    \quad\text{in }H^2(\mathbb{D}).
\]
Moreover, for \(0<M<\infty\), define
\[
    H^2(\mathbb{D},M)
    :=
    \left\{
    f\in H^2(\mathbb{D}):
    f=\sum_{j=1}^{\infty}c_j e_{\beta_j},\
    \beta_j\in\mathbb{D},\
    \sum_{j=1}^{\infty}|c_j|\le M
    \right\}.
\]
If \(f\in H^2(\mathbb{D},M)\), then the maximal-selection AFD approximation satisfies
\[
    \left\|
    f-
    \sum_{k=1}^{n}
    \langle f_k,e_{a_k}\rangle_{H^2(\mathbb{D})}
    B_k(\cdot;\mathbf a)
    \right\|_{H^2(\mathbb{D})}
    \le
    \frac{M}{\sqrt n},
    \qquad n\ge 1.
\]
\end{theorem}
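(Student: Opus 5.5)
The proof splits into four parts: the finite-section identity with interpolation, the orthogonal energy identity, convergence under maximal selection, and the $M/\sqrt n$ rate.

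\emph{Finite-section identity.} I argue by induction on $k$. The function $h_k:=f_k-\langle f_k,e_{a_k}\rangle e_{a_k}$ vanishes at $a_k$, because $\langle f_k,e_{a_k}\rangle e_{a_k}(a_k)=\sqrt{1-|a_k|^2}\,f_k(a_k)\cdot(1-|a_k|^2)^{-1/2}=f_k(a_k)$. Hence $h_k/b_{a_k}$ is holomorphic in $\mathbb D$. It lies in $H^2$ because $|b_{a_k}|=1$ on $\mathbb T$ and division by a single Blaschke factor preserves $H^2$ (inner--outer factorization, or simply the boundedness of $1/b_{a_k}$ near $\mathbb T$ together with removability at $a_k$). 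Rearranging gives $f_k=\langle f_k,e_{a_k}\rangle e_{a_k}+b_{a_k}f_{k+1}$. Iterating this from $k=1$ to $n$ telescopes to the stated identity, with remainder $f_{n+1}\prod_k b_{a_k}$. The Hermite interpolation is immediate from this: $f-S_nf=f_{n+1}\prod_k b_{a_k}$, and the Blaschke product has a zero of order exactly $m_\alpha$ at $\alpha$.

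\emph{Energy identity.} Next I record orthogonality. The remainder $f_{n+1}\prod b_{a_k}$ lies in $\prod b_{a_k}H^2$. Each $B_k$ with $k\le n$ is orthogonal to that subspace, since $\langle B_k,\prod_{j\le n}b_{a_j}g\rangle=\langle e_{a_k},\prod_{j=k}^n b_{a_j}g\rangle$, which equals $\sqrt{1-|a_k|^2}$ times the value at $a_k$ of a function containing the factor $b_{a_k}$, hence zero. Together with the orthonormality of the TM system (Proposition~\ref{prop:product_tm_orth} with $d=1$), this yields
\[
\|f\|^2=\sum_{k\le n}|\langle f_k,e_{a_k}\rangle|^2+\|f_{n+1}\|^2 .
\]
I also use that $\langle f_k,e_{a_k}\rangle=\langle f,B_k\rangle$, as recorded in the appendix. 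Thus $S_nf$ is the orthogonal projection of $f$ onto $\mathrm{span}\{B_1,\dots,B_n\}$, and $g_{n+1}:=f-S_nf$ satisfies $\|g_{n+1}\|=\|f_{n+1}\|$.

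\emph{Convergence.} The existence of a maximizer follows because $(1-|a|^2)|f_k(a)|^2\to0$ as $|a|\to1$ for any $H^2$ function (it is the squared norm of the projection onto $e_a$, and $e_a\rightharpoonup0$ weakly). For convergence I argue by contradiction. Suppose $\|g_{n+1}\|\downarrow\delta>0$. The energy identity makes $\sum_k|\langle f,B_k\rangle|^2$ finite, so $\langle f_k,e_{a_k}\rangle\to0$. Let $g=\lim g_{n+1}$, which exists in $H^2$ since the $g_{n+1}$ are successive projections, and $g\neq0$. Because $g_{n+1}=f_{n+1}\prod b_{a_k}$, I have for any $b\in\mathbb D$
\[
|\langle f_{n+1},e_b\rangle|=|\langle g_{n+1},B^{b}_{n+1}\rangle|,
\]
where $B^b_{n+1}=e_b\prod_{k\le n}b_{a_k}$. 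Maximal selection gives $|\langle g_{n+1},B^{a_{n+1}}_{n+1}\rangle|\ge|\langle g_{n+1},B^{b}_{n+1}\rangle|$, and the left side tends to $0$. I then choose $b$ so that $|\langle g, B^b_{n+1}\rangle|$ stays bounded below. This is the main obstacle, the step where Qian's original argument is delicate. I will follow it and split into two cases. If $\sum(1-|a_k|)=\infty$, the TM system is complete, so $g\perp$ all $B_k$ forces $g=0$. Otherwise the Blaschke product $\prod b_{a_k}$ converges to some $\phi$, and $g/\phi$-type arguments give a $b$ with $\langle g,e_b\phi\rangle\neq0$, contradicting the limit above.

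\emph{Rate.} The rate follows the standard orthogonal-greedy/Jones argument. Write $r_n:=\|g_{n+1}\|^2$. Since $g_{n+1}\perp S_nf$, I have $r_n=\langle g_{n+1},f\rangle=\sum_j c_j\langle g_{n+1},e_{\beta_j}\rangle$. Each term satisfies $|\langle g_{n+1},e_{\beta}\rangle|=|\langle f_{n+1},e_\beta\prod\bar b\rangle|$. I bound this by $\sup_b|\langle g_{n+1},B^b_{n+1}\rangle|$, using that $e_\beta$ lies in the span of $B_1,\dots,B_n,B^\beta_{n+1}$ and $g_{n+1}$ is orthogonal to the first $n$. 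Maximal selection then gives
\[
r_n\le M\,|\langle f_{n+1},e_{a_{n+1}}\rangle| .
\]
Combined with $r_{n+1}=r_n-|\langle f_{n+1},e_{a_{n+1}}\rangle|^2\le r_n(1-r_n/M^2)$ and $r_0\le\|f\|^2\le M^2$, the elementary lemma on such recursions gives $r_n\le M^2/n$, i.e.\ the stated bound $M/\sqrt n$.
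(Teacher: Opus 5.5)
Your proposal is correct. Outside the convergence step it tracks the paper closely. In the rate argument you work with the orthogonal remainder, using $\|g_{n+1}\|^2=\langle g_{n+1},f\rangle$ and $|\langle g_{n+1},e_\beta\rangle|\le|\langle f_{n+1},e_\beta\rangle|$. The paper instead pushes the atomic expansion through the generalized backward shift, $\mathcal{S}_a e_\beta=\lambda(a,\beta)e_\beta$ with $\lambda(a,\beta)=\overline{b_a(\beta)}$. Both reduce to $\|f_{n+1}\|^2=\sum_j\overline{c_j}\,\phi_n(\beta_j)\langle f_{n+1},e_{\beta_j}\rangle$ with $\phi_n=\prod_{k\le n}b_{a_k}$, so this is only a change of bookkeeping. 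You also justify existence of a maximizer, which the paper takes for granted.

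The real difference is the convergence step. You go through the Blaschke dichotomy: completeness of the TM system if $\sum(1-|a_k|)=\infty$, and an infinite-Blaschke-product argument otherwise. This can be completed, but the second case is only gestured at. You need $g\in\bigcap_n\phi_nH^2$ (each $\phi_nH^2$ is closed and contains $g_{m+1}$ for $m\ge n$), hence $g=\phi h$ with $h\neq0$, and then $|\langle g_{n+1},e_b\phi_n\rangle|\to|\langle h,e_b\rangle|$. Also, $\prod_k b_{a_k}$ need not converge as written: $b_{a_k}(0)=-a_k$, so for instance real $a_k\to1$ give partial products with alternating signs. You must insert unimodular normalizing constants, which is harmless because only moduli enter.

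The paper avoids all of this with a single inequality. Since $g_n=\phi_{n-1}f_n$ and $|\phi_{n-1}(b)|\le1$ on $\mathbb{D}$,
\[
|\langle g_n,e_b\rangle|=\sqrt{1-|b|^2}\,|\phi_{n-1}(b)|\,|f_n(b)|\le\sqrt{1-|b|^2}\,|f_n(b)|=|\langle f_n,e_b\rangle| .
\]
Take any $b$ with $\langle g,e_b\rangle\neq0$, which exists by totality of the Szeg\H{o} kernels. Then $|\langle f_n,e_{a_n}\rangle|\ge|\langle f_n,e_b\rangle|\ge\tfrac12|\langle g,e_b\rangle|$ for large $n$, contradicting $\langle f_n,e_{a_n}\rangle=\langle f,B_n\rangle\to0$.

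Your rate step already contains this inequality, via $e_\beta\in\operatorname{span}\{B_1,\dots,B_n,B^\beta_{n+1}\}$ and orthogonality of $g_{n+1}$ to $B_1,\dots,B_n$. So the step you flag as the main obstacle is immediate. The dichotomy does give some structural insight into where a nonzero limiting residual would have to live. But it costs you the TM completeness theorem and the convergence of infinite Blaschke products, neither of which is needed.
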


\begin{proof}
Let
\[
    \phi_0(z)\equiv 1,
    \qquad
    \phi_k(z)=\prod_{j=1}^{k}b_{a_j}(z),
    \qquad k\ge 1.
\]
The reduced-remainder recursion is well-defined in \(H^2(\mathbb{D})\). Indeed, by the reproducing property of the normalized Szeg\H{o} kernel,
\[
    \langle f_k,e_{a_k}\rangle_{H^2(\mathbb{D})}
    =
    \sqrt{1-|a_k|^2}\,f_k(a_k),
    \qquad
    e_{a_k}(a_k)
    =
    \frac{1}{\sqrt{1-|a_k|^2}}.
\]
Therefore
\[
    f_k(a_k)
    -
    \langle f_k,e_{a_k}\rangle_{H^2(\mathbb{D})}e_{a_k}(a_k)
    =
    0.
\]
Thus
\[
    h_k(z)
    :=
    f_k(z)
    -
    \langle f_k,e_{a_k}\rangle_{H^2(\mathbb{D})}e_{a_k}(z)
\]
has a zero at \(a_k\). Since \(b_{a_k}\) is a Blaschke factor, the quotient \(h_k/b_{a_k}\) is analytic in \(\mathbb{D}\). Moreover, \(b_{a_k}\) is inner, so \(|b_{a_k}(e^{it})|=1\) for almost every \(t\), and the quotient has finite \(H^2\)-norm. Hence \(f_{k+1}=h_k/b_{a_k}\in H^2(\mathbb{D})\).

The recursion is equivalent to the one-step decomposition
\[
    f_k(z)
    =
    \langle f_k,e_{a_k}\rangle_{H^2(\mathbb{D})}e_{a_k}(z)
    +
    b_{a_k}(z)f_{k+1}(z).
\]
Multiplying this identity by \(\phi_{k-1}\) gives
\[
    \phi_{k-1}(z)f_k(z)
    =
    \langle f_k,e_{a_k}\rangle_{H^2(\mathbb{D})}
    e_{a_k}(z)\phi_{k-1}(z)
    +
    \phi_k(z)f_{k+1}(z).
\]
Since \(B_k(z;\mathbf a)=e_{a_k}(z)\phi_{k-1}(z)\), summing from \(k=1\) to \(n\) telescopes and yields
\[
    f(z)
    =
    \sum_{k=1}^{n}
    \langle f_k,e_{a_k}\rangle_{H^2(\mathbb{D})}
    B_k(z;\mathbf a)
    +
    \phi_n(z)f_{n+1}(z).
\]
This is the desired finite-section identity.

The interpolation property follows immediately. Since
\[
    f-S_n f=\phi_n f_{n+1},
\]
and since \(\phi_n\) has a zero of order \(m_\alpha\) at every pole \(\alpha\) appearing \(m_\alpha\) times in the sequence \(a_1,\ldots,a_n\), the product \(\phi_n f_{n+1}\) has a zero of order at least \(m_\alpha\) at \(\alpha\). Therefore
\[
    \partial_z^q(f-S_n f)(\alpha)=0,
    \qquad q=0,\ldots,m_\alpha-1.
\]
The pointwise interpolation formula is the case \(q=0\).

It remains to record the orthogonality and energy identities needed for the convergence part. The one-dimensional TM functions \(B_k=e_{a_k}\phi_{k-1}\) form an orthonormal system. Normalization follows from \(\|e_{a_k}\|_{H^2(\mathbb{D})}=1\) and the fact that multiplication by a finite Blaschke product is an isometry on \(H^2(\mathbb{D})\). If \(j<k\), then
\[
    B_k
    =
    \phi_{j-1}b_{a_j}
    \left(\prod_{\ell=j+1}^{k-1}b_{a_\ell}\right)
    e_{a_k},
    \qquad
    B_j=\phi_{j-1}e_{a_j}.
\]
Using again that multiplication by \(\phi_{j-1}\) is an isometry,
\[
\begin{aligned}
    \langle B_k,B_j\rangle_{H^2(\mathbb{D})}
    &=
    \left\langle
    b_{a_j}
    \left(\prod_{\ell=j+1}^{k-1}b_{a_\ell}\right)
    e_{a_k},
    e_{a_j}
    \right\rangle_{H^2(\mathbb{D})} \\
    &=
    \sqrt{1-|a_j|^2}\,
    \left[
    b_{a_j}
    \left(\prod_{\ell=j+1}^{k-1}b_{a_\ell}\right)
    e_{a_k}
    \right](a_j)
    =
    0.
\end{aligned}
\]
Thus \(\{B_k\}_{k\ge1}\) is orthonormal.

Define the standard orthogonal remainder by
\[
    g_k
    :=
    f-
    \sum_{j=1}^{k-1}
    \langle f_j,e_{a_j}\rangle_{H^2(\mathbb{D})}B_j.
\]
The finite-section identity with \(k-1\) terms gives
\[
    g_k=\phi_{k-1}f_k.
\]
Using the orthogonality of \(B_k\) to \(B_1,\ldots,B_{k-1}\), one obtains
\[
\begin{aligned}
    \langle f,B_k\rangle_{H^2(\mathbb{D})}
    &=
    \langle g_k,B_k\rangle_{H^2(\mathbb{D})} \\
    &=
    \langle \phi_{k-1}f_k,\phi_{k-1}e_{a_k}\rangle_{H^2(\mathbb{D})} \\
    &=
    \langle f_k,e_{a_k}\rangle_{H^2(\mathbb{D})}.
\end{aligned}
\]
The one-step decomposition is orthogonal because
\[
    \langle b_{a_k}f_{k+1},e_{a_k}\rangle_{H^2(\mathbb{D})}
    =
    \sqrt{1-|a_k|^2}\,
    b_{a_k}(a_k)f_{k+1}(a_k)
    =
    0.
\]
Since multiplication by \(b_{a_k}\) is an isometry,
\[
    \|f_k\|_{H^2(\mathbb{D})}^2
    =
    \bigl|
        \langle f_k,e_{a_k}\rangle_{H^2(\mathbb{D})}
    \bigr|^2
    +
    \|f_{k+1}\|_{H^2(\mathbb{D})}^2.
\]
Iterating this identity gives
\[
    \|f\|_{H^2(\mathbb{D})}^2
    =
    \sum_{k=1}^{n}
    \bigl|
        \langle f_k,e_{a_k}\rangle_{H^2(\mathbb{D})}
    \bigr|^2
    +
    \|f_{n+1}\|_{H^2(\mathbb{D})}^2.
\]
Also, by the finite-section identity,
\[
    \|f-S_n f\|_{H^2(\mathbb{D})}
    =
    \|\phi_n f_{n+1}\|_{H^2(\mathbb{D})}
    =
    \|f_{n+1}\|_{H^2(\mathbb{D})}.
\]

We now prove convergence under the maximal selection principle. If \(f_N=0\) for some finite \(N\), then the expansion terminates and the convergence statement is immediate. Hence we may assume \(f_n\neq0\) for all \(n\). Let
\[
    \mathcal M
    =
    \overline{\operatorname{span}}\{B_k:k\ge1\}
    \subset H^2(\mathbb{D}).
\]
Since the \(B_k\)'s form an orthonormal system and the coefficient of \(B_k\) in \(S_n f\) is \(\langle f,B_k\rangle_{H^2(\mathbb{D})}\), \(S_n f\) is the orthogonal projection of \(f\) onto \(\operatorname{span}\{B_1,\ldots,B_n\}\). Hence
\[
    g_{n+1}:=f-S_n f
\]
converges in \(H^2(\mathbb{D})\) to
\[
    g:=f-P_{\mathcal M}f\in\mathcal M^\perp.
\]
Assume for contradiction that \(g\neq0\). The Szeg\H{o} dictionary \(\{e_b:b\in\mathbb{D}\}\) is total in \(H^2(\mathbb{D})\): if \(\langle h,e_b\rangle_{H^2(\mathbb{D})}=0\) for every \(b\in\mathbb{D}\), then \(h(b)=0\) for every \(b\in\mathbb{D}\), and therefore \(h\equiv0\). Thus there exists \(b\in\mathbb{D}\) such that
\[
    \delta
    :=
    \bigl|
        \langle g,e_b\rangle_{H^2(\mathbb{D})}
    \bigr|
    >
    0.
\]
For all sufficiently large \(n\),
\[
    \bigl|
        \langle g_n,e_b\rangle_{H^2(\mathbb{D})}
    \bigr|
    \ge
    \frac{\delta}{2}.
\]
Since \(g_n=\phi_{n-1}f_n\) and \(|\phi_{n-1}(b)|\le1\),
\[
\begin{aligned}
    \bigl|
        \langle f_n,e_b\rangle_{H^2(\mathbb{D})}
    \bigr|
    &=
    \sqrt{1-|b|^2}\,|f_n(b)| \\
    &\ge
    \sqrt{1-|b|^2}\,|\phi_{n-1}(b)|\,|f_n(b)| \\
    &=
    \bigl|
        \langle g_n,e_b\rangle_{H^2(\mathbb{D})}
    \bigr|
    \ge
    \frac{\delta}{2}.
\end{aligned}
\]
The maximal selection principle gives
\[
    \bigl|
        \langle f_n,e_{a_n}\rangle_{H^2(\mathbb{D})}
    \bigr|
    \ge
    \bigl|
        \langle f_n,e_b\rangle_{H^2(\mathbb{D})}
    \bigr|
    \ge
    \frac{\delta}{2}
\]
for all sufficiently large \(n\). On the other hand,
\[
    \langle f_n,e_{a_n}\rangle_{H^2(\mathbb{D})}
    =
    \langle f,B_n\rangle_{H^2(\mathbb{D})}.
\]
By Bessel's inequality for the orthonormal system \(\{B_n\}\),
\[
    \sum_{n=1}^{\infty}
    \bigl|
        \langle f,B_n\rangle_{H^2(\mathbb{D})}
    \bigr|^2
    \le
    \|f\|_{H^2(\mathbb{D})}^2.
\]
Therefore \(\langle f,B_n\rangle_{H^2(\mathbb{D})}\to0\), contradicting the positive lower bound above. Hence \(g=0\), and
\[
    \|f-S_n f\|_{H^2(\mathbb{D})}\to0.
\]

Finally, suppose \(f\in H^2(\mathbb{D},M)\). Choose an atomic representation
\[
    f
    =
    \sum_{j=1}^{\infty}c_j e_{\beta_j},
    \qquad
    \beta_j\in\mathbb{D},
    \qquad
    \sum_{j=1}^{\infty}|c_j|\le M.
\]
For \(a\in\mathbb{D}\), define the generalized backward shift
\[
    \mathcal S_a h
    :=
    \frac{
        h-\langle h,e_a\rangle_{H^2(\mathbb{D})}e_a
    }
    {b_a}.
\]
The one-step energy identity shows that \(\mathcal S_a\) is a contraction on \(H^2(\mathbb{D})\). A direct calculation gives, for any \(a,\beta\in\mathbb{D}\),
\[
    \mathcal S_a e_\beta
    =
    \lambda(a,\beta)e_\beta,
    \qquad
    \lambda(a,\beta)
    =
    \frac{\overline{\beta}-\overline{a}}{1-\overline{\beta}a}.
\]
Indeed,
\[
\begin{aligned}
    \mathcal S_a e_\beta(z)
    &=
    \frac{
    \dfrac{\sqrt{1-|\beta|^2}}{1-\overline{\beta}z}
    -
    \dfrac{\sqrt{1-|a|^2}\sqrt{1-|\beta|^2}}{1-\overline{\beta}a}
    \dfrac{\sqrt{1-|a|^2}}{1-\overline{a}z}
    }
    {\dfrac{z-a}{1-\overline{a}z}}  \\
    &=
    \frac{\overline{\beta}-\overline{a}}{1-\overline{\beta}a}
    \frac{\sqrt{1-|\beta|^2}}{1-\overline{\beta}z}
    =
    \lambda(a,\beta)e_\beta(z).
\end{aligned}
\]
Moreover,
\[
    |\lambda(a,\beta)|
    =
    \left|
        \frac{\beta-a}{1-\overline{a}\beta}
    \right|
    \le1.
\]
Since the atomic expansion is absolutely convergent in \(H^2(\mathbb{D})\), repeated application of the bounded operators \(\mathcal S_{a_1},\ldots,\mathcal S_{a_{n-1}}\) gives
\[
    f_n
    =
    \sum_{j=1}^{\infty}c_j^{(n)}e_{\beta_j},
    \qquad
    c_j^{(n)}
    =
    c_j
    \prod_{\ell=1}^{n-1}
    \lambda(a_\ell,\beta_j),
\]
and hence
\[
    \sum_{j=1}^{\infty}|c_j^{(n)}|
    \le
    \sum_{j=1}^{\infty}|c_j|
    \le M.
\]
In particular,
\[
    \|f_n\|_{H^2(\mathbb{D})}
    \le
    \sum_{j=1}^{\infty}|c_j^{(n)}|
    \le M.
\]
Taking inner products termwise,
\[
\begin{aligned}
    \|f_n\|_{H^2(\mathbb{D})}^2
    &=
    \bigl|
        \langle f_n,f_n\rangle_{H^2(\mathbb{D})}
    \bigr| \\
    &=
    \left|
    \sum_{j=1}^{\infty}
    c_j^{(n)}
    \langle e_{\beta_j},f_n\rangle_{H^2(\mathbb{D})}
    \right| \\
    &\le
    \sum_{j=1}^{\infty}
    |c_j^{(n)}|\,
    \bigl|
        \langle f_n,e_{\beta_j}\rangle_{H^2(\mathbb{D})}
    \bigr| \\
    &\le
    M
    \sup_{a\in\mathbb{D}}
    \bigl|
        \langle f_n,e_a\rangle_{H^2(\mathbb{D})}
    \bigr|.
\end{aligned}
\]
Under the maximal selection principle,
\[
    \sup_{a\in\mathbb{D}}
    \bigl|
        \langle f_n,e_a\rangle_{H^2(\mathbb{D})}
    \bigr|
    =
    \bigl|
        \langle f_n,e_{a_n}\rangle_{H^2(\mathbb{D})}
    \bigr|.
\]
Therefore
\[
    \bigl|
        \langle f_n,e_{a_n}\rangle_{H^2(\mathbb{D})}
    \bigr|
    \ge
    \frac{\|f_n\|_{H^2(\mathbb{D})}^2}{M}.
\]
Let
\[
    d_n:=\|f_n\|_{H^2(\mathbb{D})}^2.
\]
Combining the preceding lower bound with the one-step energy identity yields
\[
    d_{n+1}
    =
    d_n
    -
    \bigl|
        \langle f_n,e_{a_n}\rangle_{H^2(\mathbb{D})}
    \bigr|^2
    \le
    d_n\left(1-\frac{d_n}{M^2}\right).
\]
Since \(0\le d_n\le M^2\) for every \(n\), we claim that \(d_n\le M^2/n\). The claim is immediate for \(n=1\). If \(d_{n+1}=0\), it is also immediate. Otherwise, the recurrence implies \(0<d_n<M^2\), and hence
\[
    \frac{1}{d_{n+1}}
    \ge
    \frac{1}{d_n(1-d_n/M^2)}
    =
    \frac{1}{d_n}
    +
    \frac{1}{M^2-d_n}
    \ge
    \frac{1}{d_n}
    +
    \frac{1}{M^2}.
\]
Induction gives \(1/d_n\ge n/M^2\), and hence \(d_n\le M^2/n\). Therefore
\[
\begin{aligned}
    \left\|
    f-
    \sum_{k=1}^{n}
    \langle f_k,e_{a_k}\rangle_{H^2(\mathbb{D})}B_k
    \right\|_{H^2(\mathbb{D})}
    &=
    \|f_{n+1}\|_{H^2(\mathbb{D})} \\
    &=
    \sqrt{d_{n+1}}
    \le
    \frac{M}{\sqrt{n+1}}
    \le
    \frac{M}{\sqrt n}.
\end{aligned}
\]
This completes the proof.
\end{proof}

\paragraph{Remark on the convergence rate.}
Regarding the convergence rate, we note that functional approximation rate results are usually given in terms of the degree of smoothness of the function to be approximated: the higher the smoothness, the higher the convergence rate. For functions with analytic regularity, AFD, like other well-established approximation methods, has an exponential decay rate \citep{AFD_Qian}. The convergence rate \(O(n^{-1/2})\) above is actually meant to be rather good, for it gives a borderline decay rate for the worst cases, including non-smooth boundary behavior and even discontinuities. As useful references, exact Shannon interpolation and the Karhunen--Lo\`eve expansion obey the same convergence rate. On the other hand, in practical applications, the residual energy of AFD is observed to decay rather fast.

\clearpage
\section{Detailed Experiment Setups}\label{app:detaled_exp_setups}
This appendix provides additional experiment setups following Section~\ref{sec:implementation}. For all SAFDNO experiments, we use the same SIREN hyperparameter setting. Specifically, the coordinate input is first projected into a $16$-channel learnable Gaussian Fourier feature embedding with scale $\sigma=512$, and the resulting embedding is fed into the SIREN with one hidden layer, $32$ hidden channels, and $w=30$. The pole predictor is implemented as an attention-based neural network with $4$ attention heads, $2$ attention layers, and $8$ sinusoidal frequencies for positional encoding. Table~\ref{tab:datasets} lists the PDE datasets used in our main experiments, and Table~\ref{tab:hyperparams} presents the hyperparameter settings for our SAFDNO and FNO.

\begin{table}[htbp]
  \centering
  \caption{Description of benchmark datasets used in the main experiments.}
  \label{tab:datasets}
  \begin{tabular}{l|c c c c}
  \toprule
  \multicolumn{1}{c|}{\textsc{Dataset}} & \textsc{Resolution} & \textsc{No.\ train} & \textsc{No.\ test} & \textsc{Source}\\
  \midrule
  Darcy      & $85\!\times\!85$  & $1000$ & $200$
   & \citep{fno,kossaifi2025librarylearningneuraloperators}\\
  NS         & $64\!\times\!64,\;T=20$ & $1000$ & $200$
  & \citep{fno}\\
  Pipe       & $129\!\times\!129$  & $1000$ & $200$
  & \citep{geofno}\\
  Airfoil    & $221\!\times\!51$  & $1000$ & $200$
   & \citep{geofno}\\
  Plasticity & $101\!\times\!31,\;T=20$ & $900$  & $80$
   & \citep{geofno}\\
  Burgers    & $1024,\;T=20$           & $1000$ & $200$
  & \citep{takamoto2022pdebench}\\
  1D CFD      & $1024,\;T=20$           & $1800$ & $200$
  & \citep{takamoto2022pdebench}\\
  2D CFD      & $128\!\times\!128,\;T=10$ & $1800$ & $200$
  & \citep{takamoto2022pdebench}\\
  ReacDiff   & $1024,\;T=20$           & $1000$ & $200$
  & \citep{takamoto2022pdebench}\\
  \bottomrule
  \end{tabular}
\end{table}

For time-dependent PDEs in Table~\ref{tab:datasets}, we adopt one-step autoregressive rollouts, where predicted values are recursively fed back during both training and inference, except for Plasticity, which is modeled non-autoregressively by predicting the deformation field at a queried time step from a static spatial input field. Specifically, we use the previous 10 steps as input for Navier-Stokes, Burgers, 1D CFD, and reaction-diffusion, and the previous 5 steps as input for 2D CFD.

Additionally, for zero-shot super-resolution, we use the Darcy benchmark and the FNO implementation from the NeuralOperator library \citep{kossaifi2025librarylearningneuraloperators}, because FNO exhibits substantially stronger zero-shot super-resolution performance in this setup, making it a more competitive baseline for comparison against our SAFDNO.

\begin{table}[htbp]
    \centering
    \caption{Hyperparameters for SAFDNO and FNO.}
    \label{tab:hyperparams}
    \begin{tabular}{lccc|ccc}
    \toprule
    & \multicolumn{3}{c|}{FNO} & \multicolumn{3}{c}{SAFDNO} \\
    \cmidrule(r){2-4} \cmidrule(l){5-7}
    Dataset & Width & Layers & Modes & Width & Layers & Poles \\
    \midrule
    Plasticity     & / & / & / & 64 & 4 & $(16,16)$ \\
    Airfoil        & /  & / &  / & 64 & 4 & $(16,16)$ \\
    Pipe           & / & / & / & 64 & 4 & $(16,16)$ \\
    Navier-Stokes & 64  & 4 & $(32,32)$ & 64 & 4 & $(16,16)$ \\
    Darcy          & 128 & 4 & $(12,12)$ & 64 & 4 & $(32,32)$ \\
    Burgers        & 32  & 4 & 1024      & 32 & 4 & 48 \\
    1D CFD         & 32  & 4 & 1024      & 32 & 4 & 16 \\
    ReacDiff       & 32  & 4 & 1024      & 64 & 4 & 16 \\
    2D CFD         & 32  & 4 & $(32,32)$ & 32 & 4 & $(16,16)$ \\
    \bottomrule
    \end{tabular}
  \end{table}

\clearpage
\section{Additional Experiment Results}\label{app:additional_results}
This appendix reports additional results that complement Section~\ref{sec:main_results}, including efficiency comparisons between SAFDNO and FNO and visualizations of prediction performance on PDEs.
  
\subsection{Efficiency Analysis}\label{app:efficiency_analysis}

\begin{table}[htbp]
    \centering
    \caption{Efficiency comparison on the Darcy benchmark with a batch size of $1$. Training time is the average epoch time. Inference time and memory are measured per sample.}
    \label{tab:darcy_efficiency}
    \begin{tabular}{lcc}
      \toprule
      Metric & FNO & SAFDNO \\
      \midrule
      Params                     & 19.01M & 1.56M \\
      Train Time / Epoch (s)     & 5.79   & 25.49 \\
      Train Memory (MB)          & 741.76 & 541.11 \\
      Inference Time / Sample (ms) & 1.07 & 8.66 \\
      Inference Memory (MB)      & 209.00 & 115.59 \\
      Inference FLOPs (GFLOPs)   & 2.20   & 7.98 \\
      \bottomrule
    \end{tabular}
\end{table}

\begin{table}[htbp]
    \centering
    \caption{Efficiency comparison on the Navier-Stokes equation with a batch size of $6$. Training time is the average epoch time. Inference time and memory are measured per sample.}
    \label{tab:ns_efficiency}
    \begin{tabular}{lcc}
      \toprule
      Metric & FNO & SAFDNO \\
      \midrule
      Params                        & 33.59M  & 1.55M \\
      Train Time / Epoch (s)        & 7.45    & 43.30 \\
      Train Memory (MB)             & 2187.64 & 11974.76 \\
      Inference Time / Sample (ms)  & 10.59   & 76.04 \\
      Inference Memory (MB)         & 312.39  & 56.72 \\
      Inference FLOPs (GFLOPs)      & 3.16    & 36.29 \\
      \bottomrule
    \end{tabular}
\end{table}

\begin{table}[htbp]
    \centering
    \caption{Scaling effect of the number of Fourier modes and SAFD poles on the Darcy flow benchmark. We vary the number of modes or poles. Under the same setting, SAFDNO consistently achieves lower test error than FNO. Increasing the number of SAFDNO poles improves accuracy while keeping the trainable parameter count nearly unchanged, whereas increasing the number of FNO modes substantially increases the number of trainable parameters.}
    \label{tab:darcy_hp_sensitivity}
    \begin{tabular}{lccc}
      \toprule
      Model & poles/modes & Test rel.\ err.\ (\%) & Params (M) \\
      \midrule
      FNO    & (8, 8)    & 1.10 &   8.52 \\
      FNO    & (16, 16)  & 1.12 &  33.69 \\
      FNO    & (32, 32)  & 1.14 & 134.35 \\
      \midrule
      SAFDNO & (8, 8)    & 0.81 &   1.54 \\
      SAFDNO & (16, 16)  & 0.64 &   1.55 \\
      SAFDNO & (32, 32)  & \textbf{0.54} &   1.56 \\
      \bottomrule
    \end{tabular}
\end{table}

\clearpage
\subsection{Visualizations of Prediction Performance and Error Analysis}\label{app:performance_visual}

\begin{figure}[htbp]
    \centering
    \includegraphics[width=0.8\linewidth]{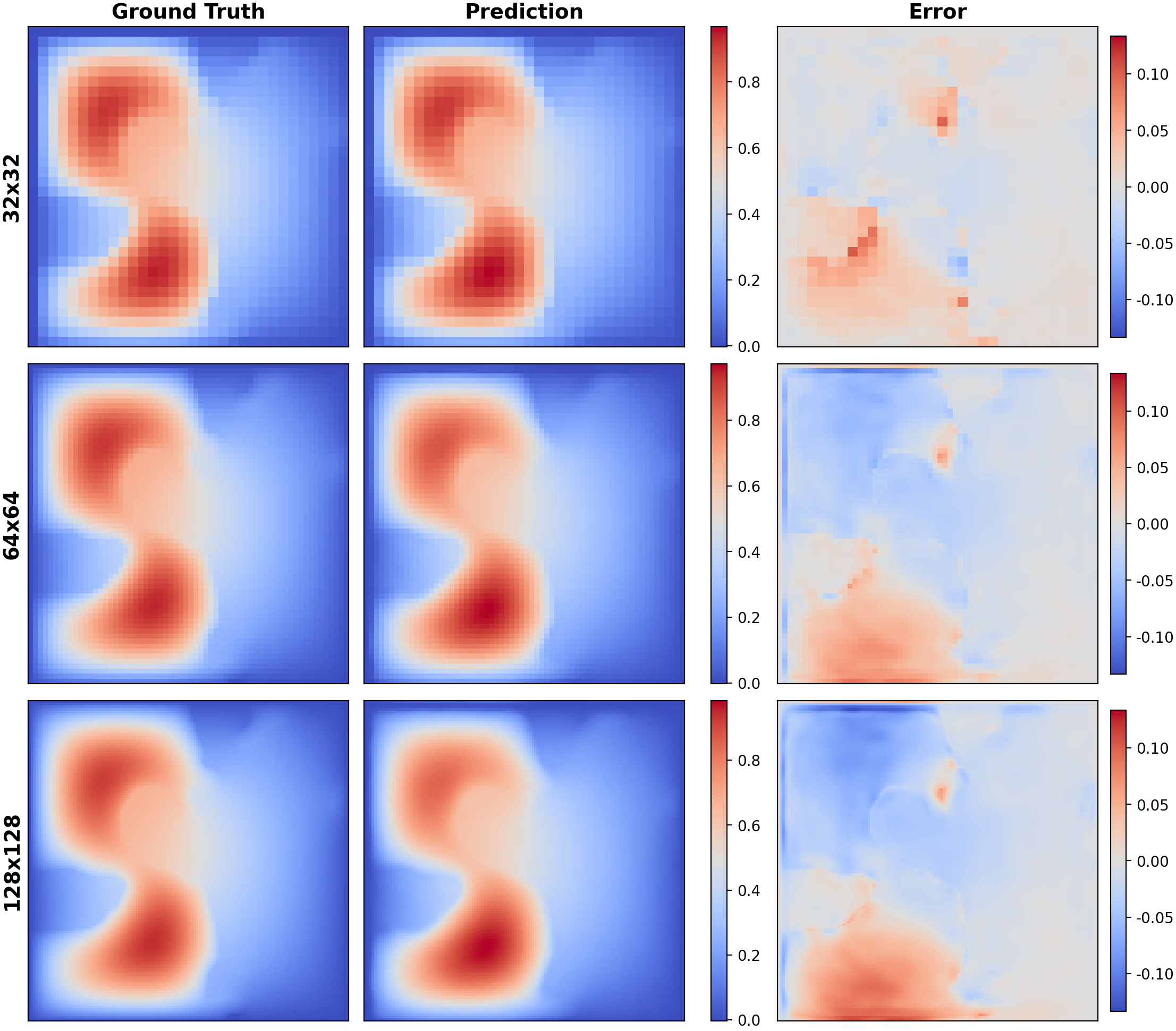}
    \caption{Zero-Shot Super-Resolution of SAFDNO on 2D Darcy Flow.}
    \label{fig:afdno_zeroshot_darcy}
\end{figure}

\begin{figure}[htbp]
    \centering
    \includegraphics[width=0.9\linewidth]{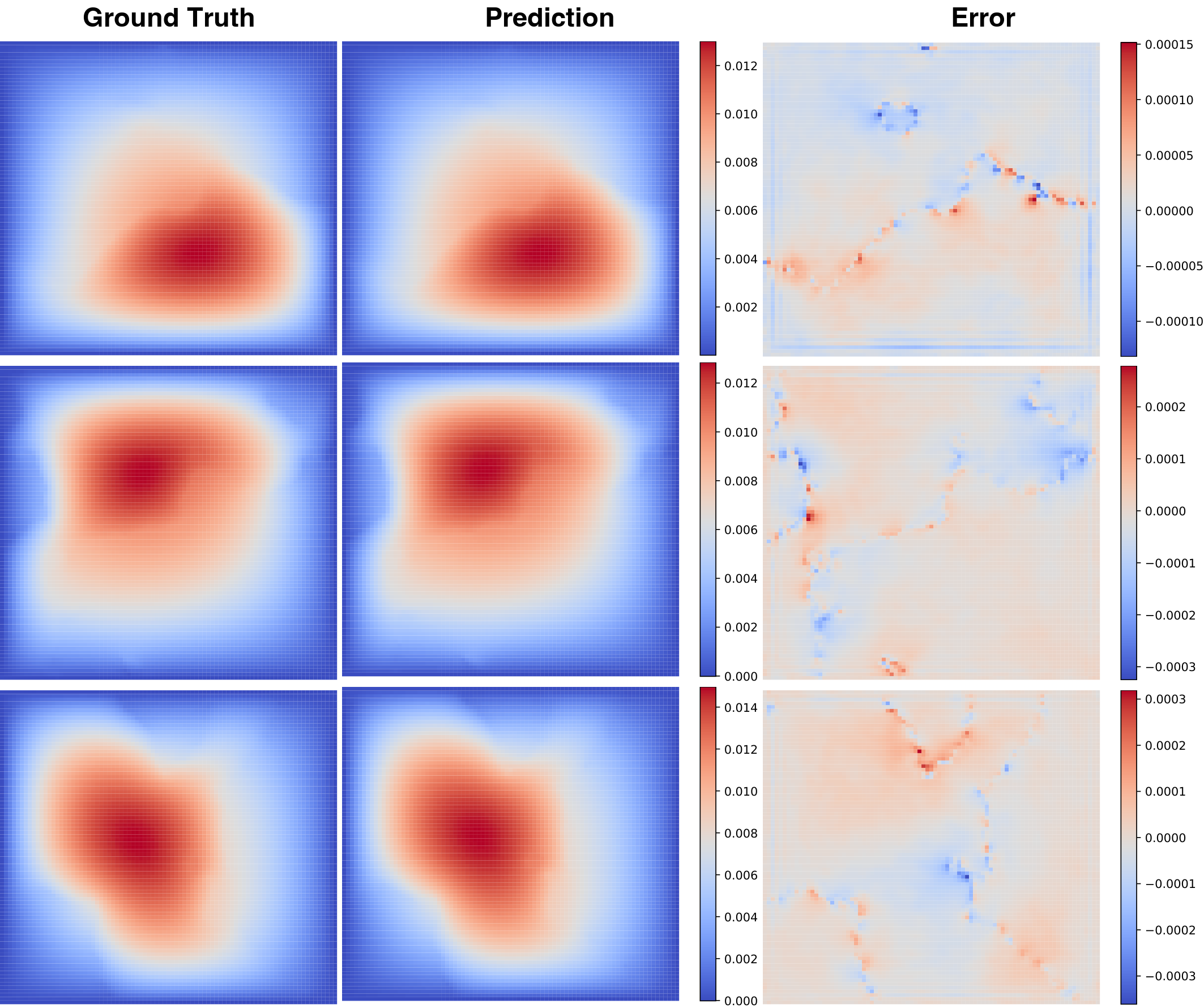}
    \caption{Performance of SAFDNO on Darcy}
    \label{fig:SAFDNO_darcy}
\end{figure}

\begin{figure}[htbp]
    \centering
    \includegraphics[width=0.8\linewidth]{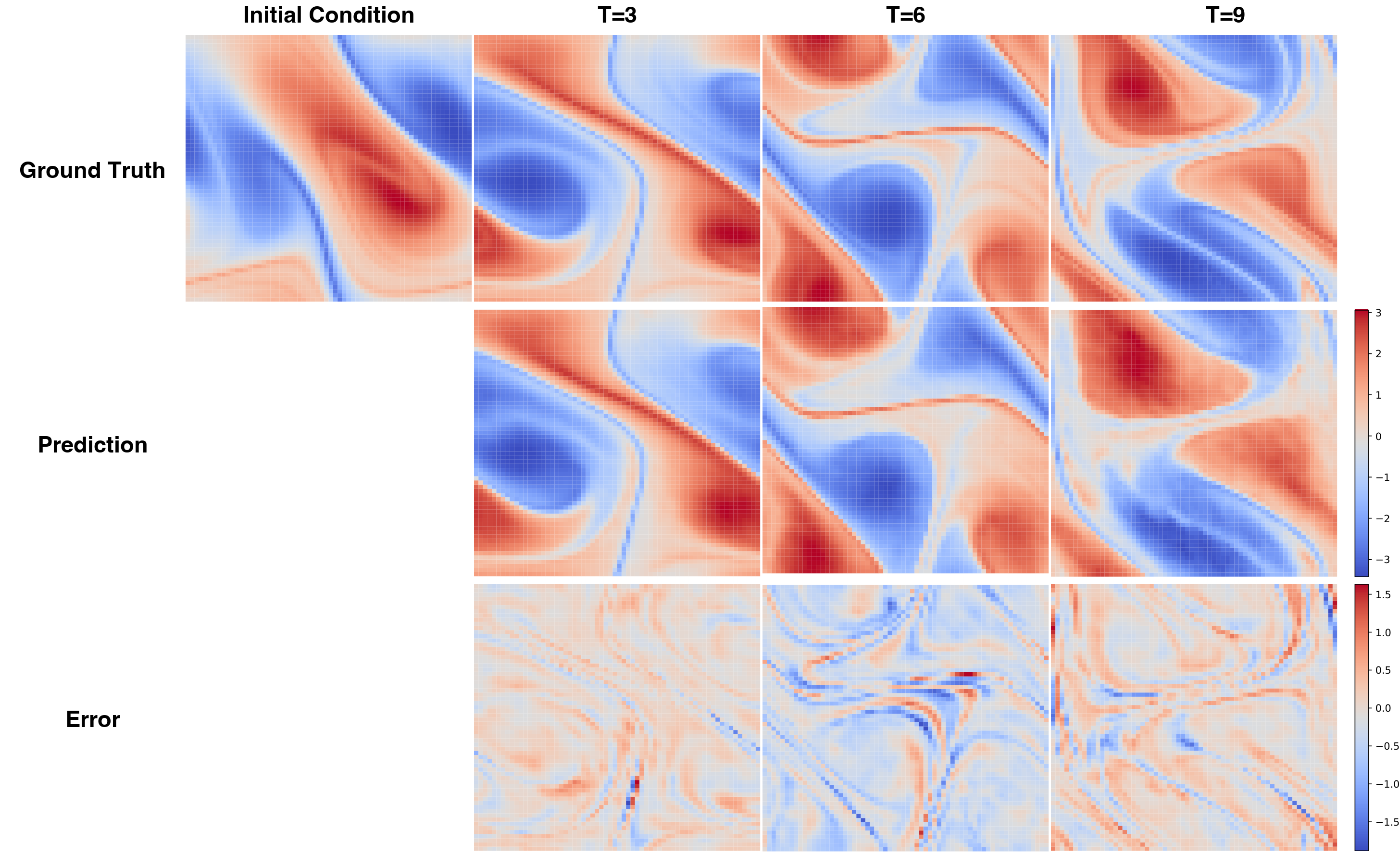}
    \caption{Performance of SAFDNO on Navier-Stokes}
    \label{fig:SAFDNO_ns}
\end{figure}

\begin{figure}[htbp]
    \centering
    \includegraphics[width=0.8\linewidth]{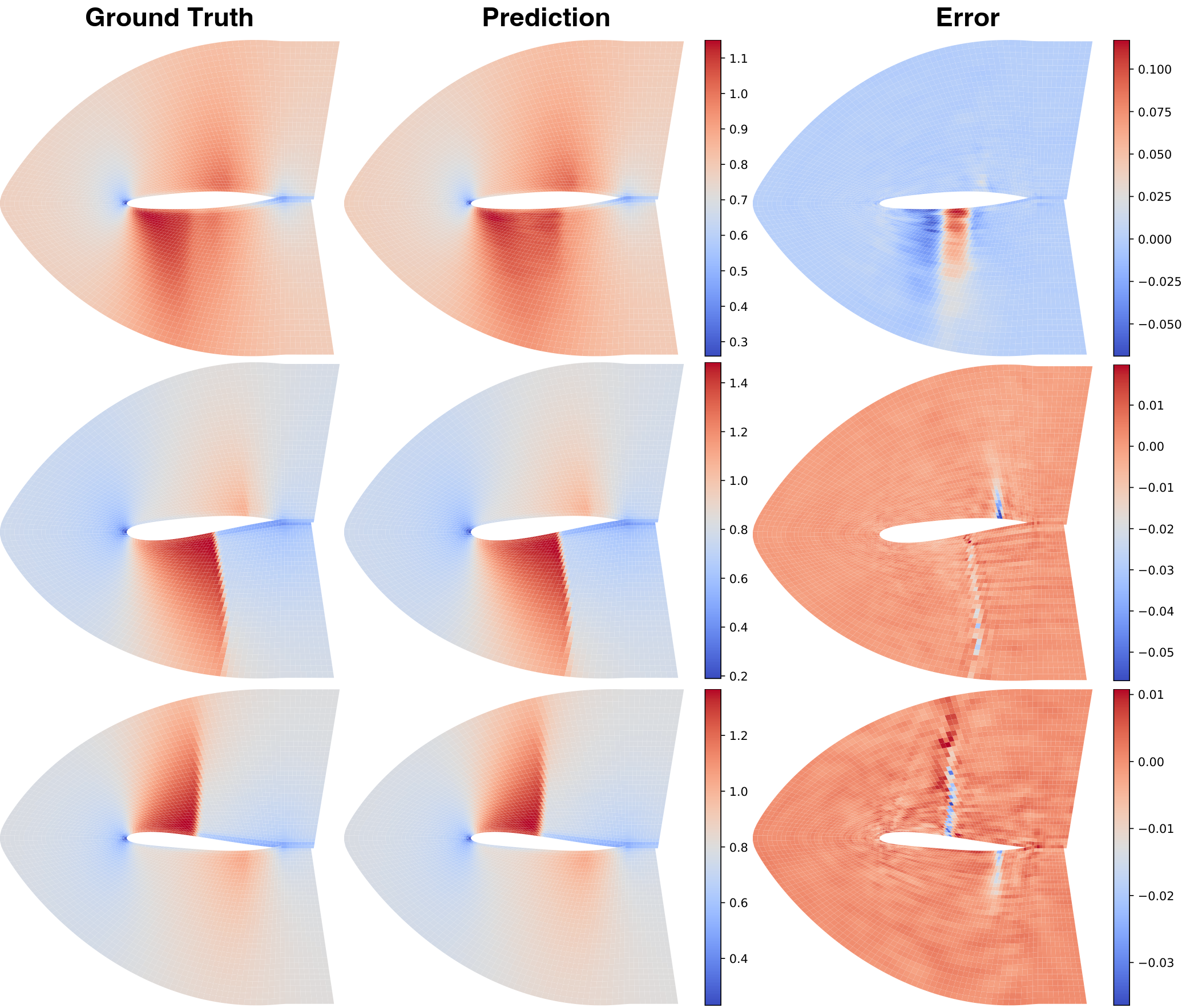}
    \caption{Performance of SAFDNO on Airfoil}
    \label{fig:SAFDNO_airfoil}
\end{figure}

\begin{figure}[htbp]
    \centering
    \includegraphics[width=0.8\linewidth]{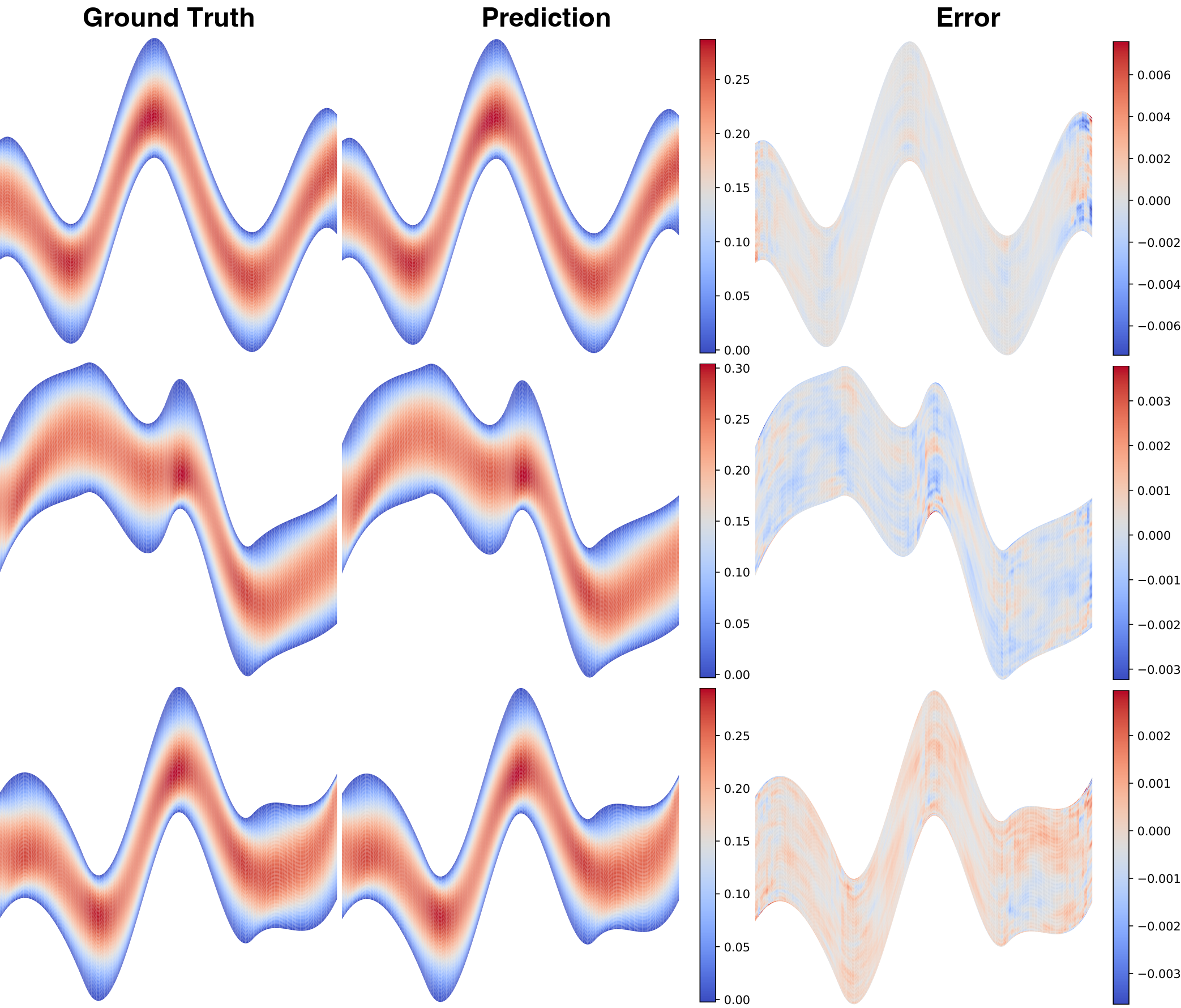}
    \caption{Performance of SAFDNO on Pipe}
    \label{fig:SAFDNO_pipe}
\end{figure}

\begin{figure}[htbp]
    \centering
    \includegraphics[width=0.8\linewidth]{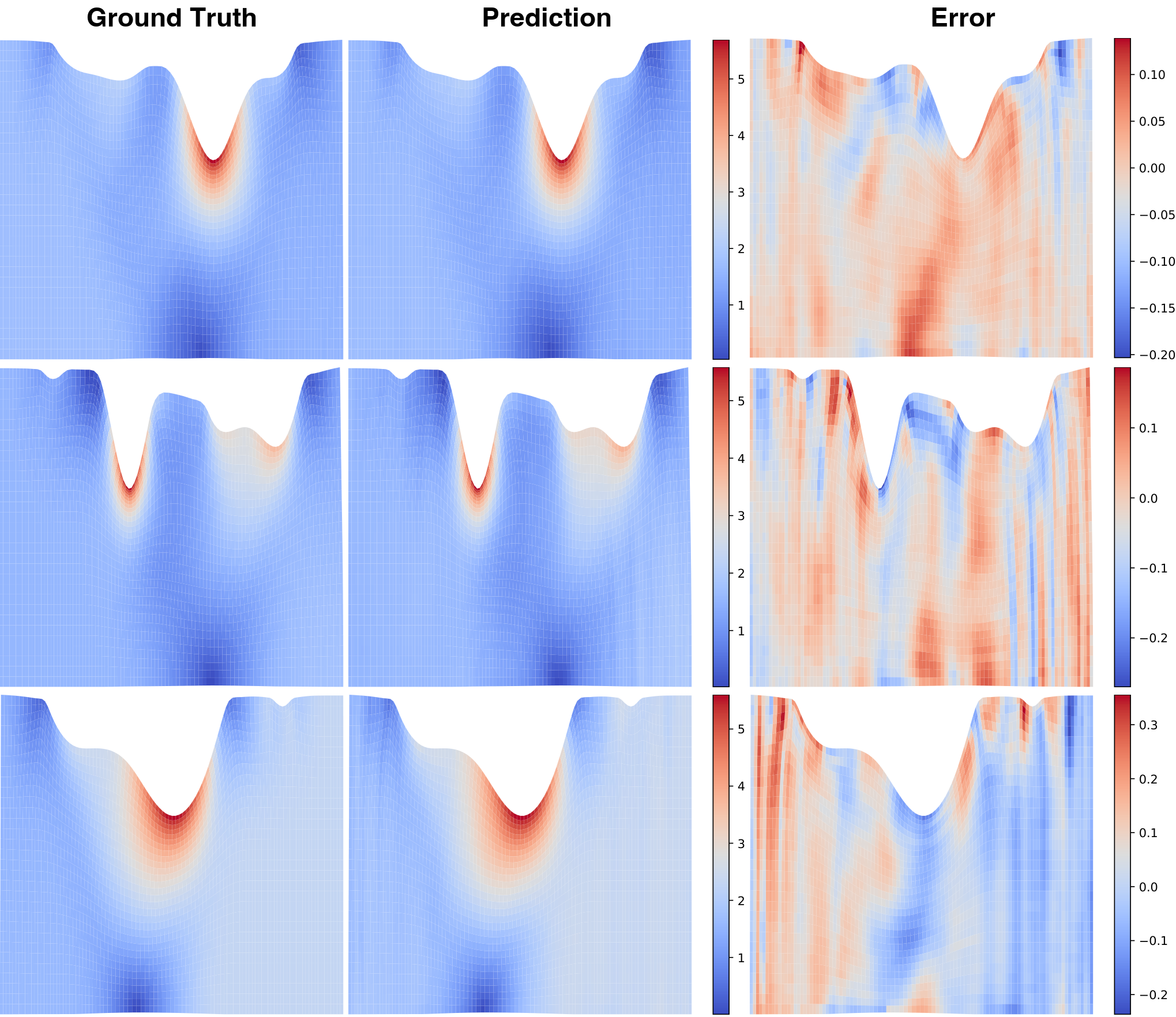}
    \caption{Performance of SAFDNO on Plasticity}
    \label{fig:SAFDNO_pla}
\end{figure}

\begin{figure}[htbp]
    \centering
    \begin{subfigure}{0.48\linewidth}
        \centering
        \includegraphics[width=\linewidth]{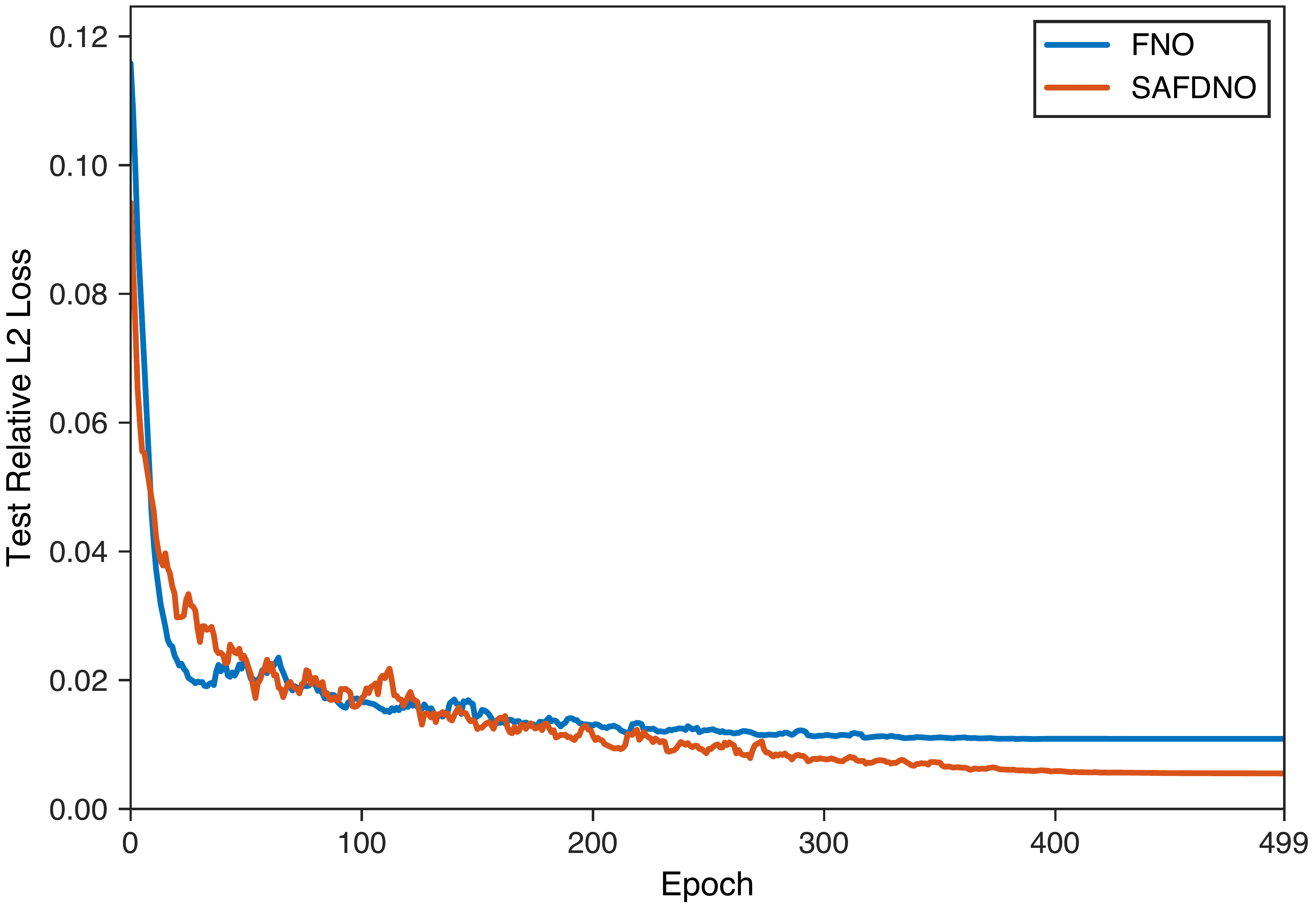}
        \caption{Darcy}
        \label{fig:loss_compare_darcy}
    \end{subfigure}\hfill
    \begin{subfigure}{0.48\linewidth}
        \centering
        \includegraphics[width=\linewidth]{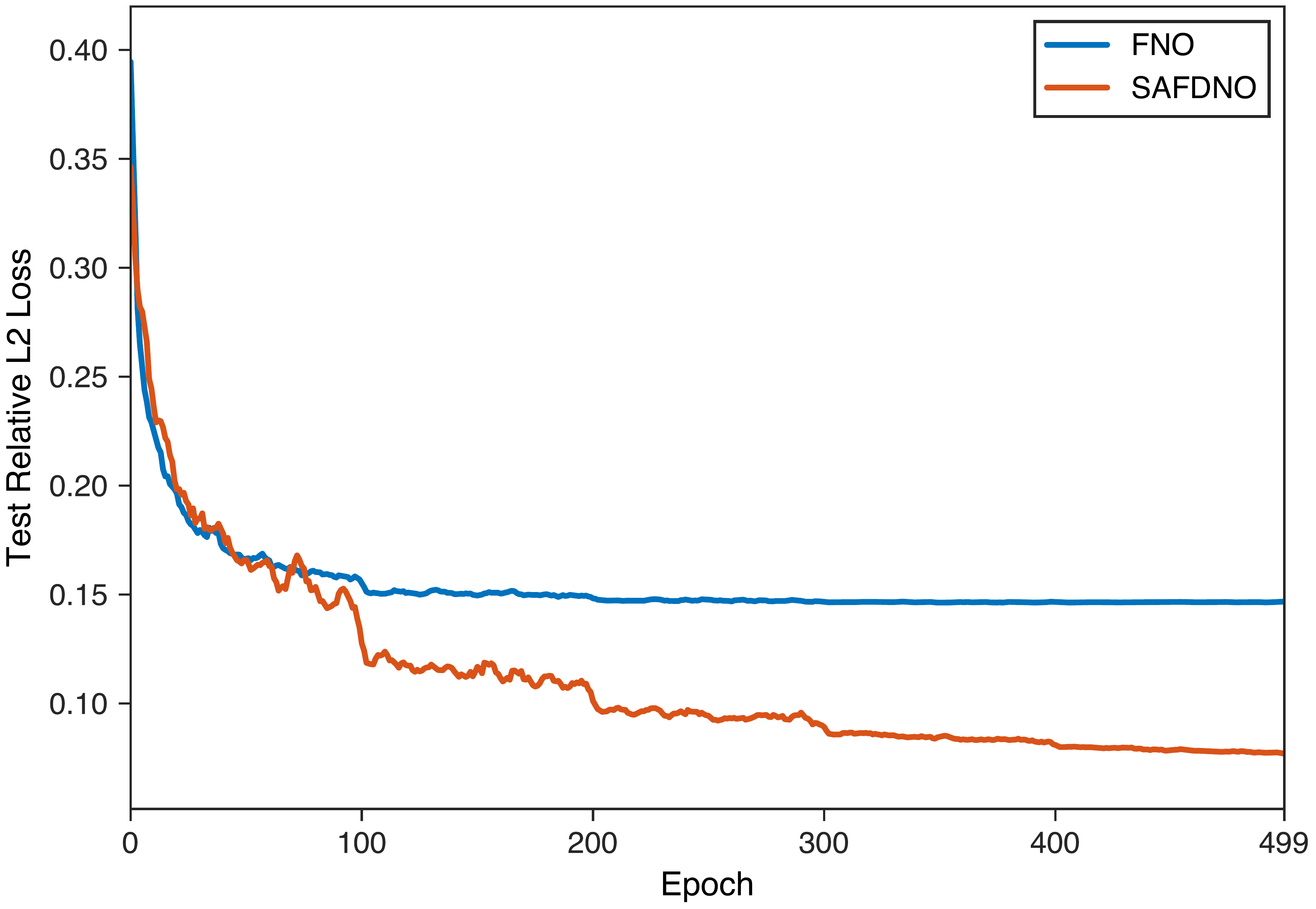}
        \caption{Navier-Stokes}
        \label{fig:loss_compare_ns}
    \end{subfigure}
    \caption{Test relative $\ell_2$ error - SAFDNO vs FNO}
    \label{fig:loss_compare_darcy_ns}
\end{figure}






\end{document}